\documentclass{article}

\PassOptionsToPackage{numbers, compress}{natbib}
\usepackage{subcaption}

\usepackage[numbers]{natbib}
\usepackage{url}
\usepackage{graphicx} %
\usepackage[letterpaper,top=2cm,bottom=2cm,left=3cm,right=3cm,marginparwidth=1.75cm]{geometry}
\usepackage{palatino}
\usepackage{jmlrutils}

\usepackage{mdframed}
\usepackage[utf8]{inputenc} %
\usepackage[T1]{fontenc}    %
\usepackage{hyperref}       %
\usepackage{url}            %
\usepackage{booktabs}       %
\usepackage{amsfonts}       %
\usepackage{nicefrac}       %

\usepackage{times}
\usepackage[utf8]{inputenc}
\usepackage[T1]{fontenc}
\usepackage{booktabs}
\usepackage{amsfonts}
\usepackage[noend]{algorithmic}
\usepackage{nicefrac}
\usepackage{soul}
\usepackage{backref}

 \renewcommand*{\backrefalt}[4]{%
    \ifcase #1%
     \or Cited on page:~#2%
     \else Cited on pages:~#2%
    \fi%
    }

\renewcommand{\proofname}{Proof.}
\allowdisplaybreaks[4]

\usepackage{bm}

\usepackage{dsfont}
\usepackage{mathtools}
\usepackage{thmtools}
\usepackage{thm-restate}
\usepackage{tikz}
\usepackage{hyperref}

\usepackage{booktabs}       %
\usepackage{amsfonts}       %
\usepackage{nicefrac}       %
\usepackage{bbding}
\usepackage{multicol}
\usepackage{multirow}
\usepackage{courier}
\usepackage{mathrsfs}
\usepackage{setspace}
\usepackage{tikz}

\usepackage{makecell}
\usepackage{bm}
\usepackage{appendix}
\usepackage{comment}
\usepackage{tablefootnote}
\usepackage{multirow}
\usepackage{hhline}%
\usepackage[font=footnotesize,labelfont=bf]{caption}
\usepackage{cases}
\usepackage{enumitem}

\newtheorem{condition}{Condition}
\usepackage{algorithm}

\newcommand{\pr}{\mathrm{Pr}}
\newcommand{\am}{\mathop{\mathrm{argmax}}}

\newcommand{\ac}{\mathrm{Acc}}
\newcommand{\indicator}[1]{\mathds{1}[#1]}

\usepackage[capitalize,noabbrev]{cleveref}

\title{When More Experts Hurt: Underfitting in Multi-Expert Learning to Defer}
\usepackage{times}

\author{
\textbf{Shuqi Liu}$^{1*}$, \textbf{Yuzhou Cao}$^{1*}$, \textbf{Lei Feng}$^2$, \textbf{Bo An}$^1$, \textbf{Luke Ong}$^1$\\
{$^1$Nanyang Technological University}\\
{$^2$Southeast University}\\
{$^*$Equal Contribution}
}
\date{}
\begin{document}

\maketitle

\begin{abstract}

Learning to Defer (L2D) enables a classifier to abstain from predictions and defer to an expert, and has recently been extended to multi-expert settings.
In this work, we show that multi-expert L2D is fundamentally more challenging than the single-expert case.
With multiple experts, the classifier's underfitting becomes inherent, which seriously degrades prediction performance, whereas in the single-expert setting it arises only under specific conditions.
We theoretically reveal that this stems from an intrinsic expert identifiability issue: learning which expert to trust from a diverse pool, a problem absent in the single-expert case and renders existing underfitting remedies failed.
To tackle this issue, we propose PiCCE (\textbf{Pi}ck the \textbf{C}onfident and \textbf{C}orrect \textbf{E}xpert), a surrogate-based method that adaptively identifies a reliable expert based on empirical evidence. PiCCE effectively reduces multi-expert L2D to a single-expert–like learning problem, thereby resolving multi-expert underfitting. 
We further prove its statistical consistency and ability to recover class probabilities and expert accuracies. 
Extensive experiments across diverse settings, including real-world expert scenarios, validate our theoretical results and demonstrate improved performance.

\end{abstract}

\section{Introduction}

In risk-critical machine learning tasks, misclassification can be fatal. 
Unlike ordinary machine learning pipelines that deploy models solely for prediction, 
the Learning to Defer (\textbf{L2D}) paradigm \cite{madras,DBLP:conf/icml/MozannarS20,gagan} aims to enhance system reliability by integrating human experts' decisions.
This framework allows a system to defer the prediction of a sample to an expert 
if it deems the expert more capable of providing a correct prediction. 
Due to its practical importance, L2D has attracted significant attention in recent years, 
with substantial works extending the framework to more complex and realistic settings \cite{Okati2,DBLP:conf/icml/VermaN22,DBLP:conf/icml/CharusaieMSS22,DBLP:conf/aistats/VermaBN23,mao2024realizable,wei2024exploiting,DBLP:conf/aistats/TailorPVMN24,charusaie2024unifying,palomba2024causal,DBLP:conf/icml/MontreuilCNO25, DBLP:conf/nips/JitkrittumGMNRK23,DBLP:journals/corr/abs-2502-08773,DBLP:conf/iclr/NarasimhanJR0GM25}.

Despite the conceptual appeal of L2D, training such systems remains computationally challenging. 
The performance is typically evaluated by the overall system accuracy, an objective that is non-convex and discrete, 
making it difficult to optimize directly.
To render the training tractable, 
the dominant approach in the literature relies on continuous surrogate losses. 
These methods are often designed to satisfy statistical consistency,
ensuring that optimizing the surrogate asymptotically recovers the optimal L2D rule \cite{DBLP:conf/icml/MozannarS20,DBLP:conf/icml/VermaN22,DBLP:conf/nips/CaoM0W023,DBLP:conf/aistats/LiuCZF024,ruggieri,pugnana2025deferring}.
Beyond surrogate-based training, 
alternative strategies such as post-hoc methods have also been proposed to bypass the direct optimization of the deferral policy \cite{DBLP:conf/nips/NarasimhanJMRK22,DBLP:conf/nips/MaoMM023,DBLP:conf/isaim/MaoMZ24,montreuil2024two,DBLP:journals/corr/abs-2504-12988}.

While the majority of existing L2D research focuses on the single-expert setting, 
real-world applications often involve a pool of experts with complementary skills. 
This shift significantly increases the difficulty of the learning problem. 
In the single-expert case, the system only needs to make a binary decision of whether to predict or defer. 
In the multi-expert setting, however, 
the system must determine not only when to defer but also which specific expert is the most reliable for a given input. 
To address this challenge, \citet{DBLP:conf/aistats/VermaBN23} extended classic single-expert methods \cite{DBLP:conf/icml/MozannarS20,DBLP:conf/icml/VermaN22} to the multi-expert domain. 
This approach was subsequently shown to be a special case of the unified consistency framework proposed by \citet{DBLP:conf/isaim/MaoMZ24}. 
In addition to these surrogate-based methods, \citet{DBLP:conf/nips/MaoMM023,mao2025mastering} developed two-stage methods to explicitly handle expert selection processes.

\begin{figure*}
\captionsetup{skip=4pt} 
    \centering
    \begin{subfigure}[t]{0.36\textwidth}
    \setlength{\abovecaptionskip}{2pt}
        \centering
        \includegraphics[width=\linewidth]{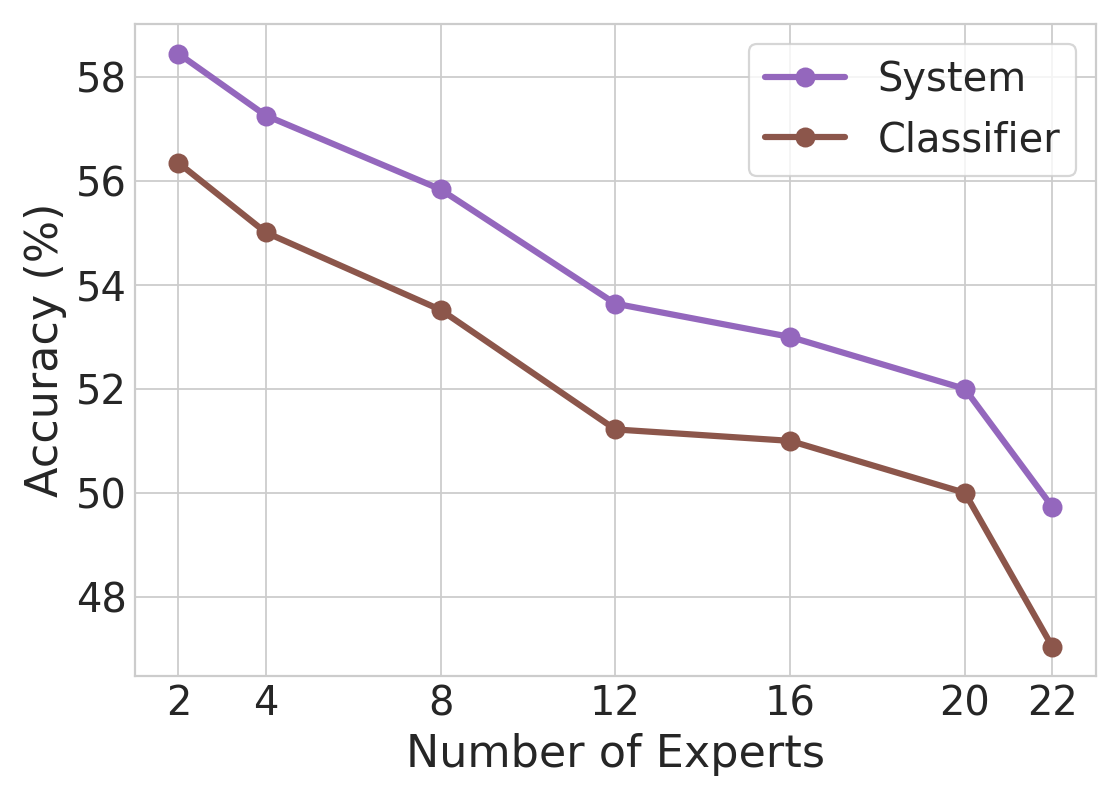}
        \caption{}
        \label{ImageNet_CE_DogExpert}
    \end{subfigure}
    \begin{subfigure}[t]{0.54\textwidth}
    \setlength{\abovecaptionskip}{2pt}
        \centering
        \includegraphics[width=\linewidth]{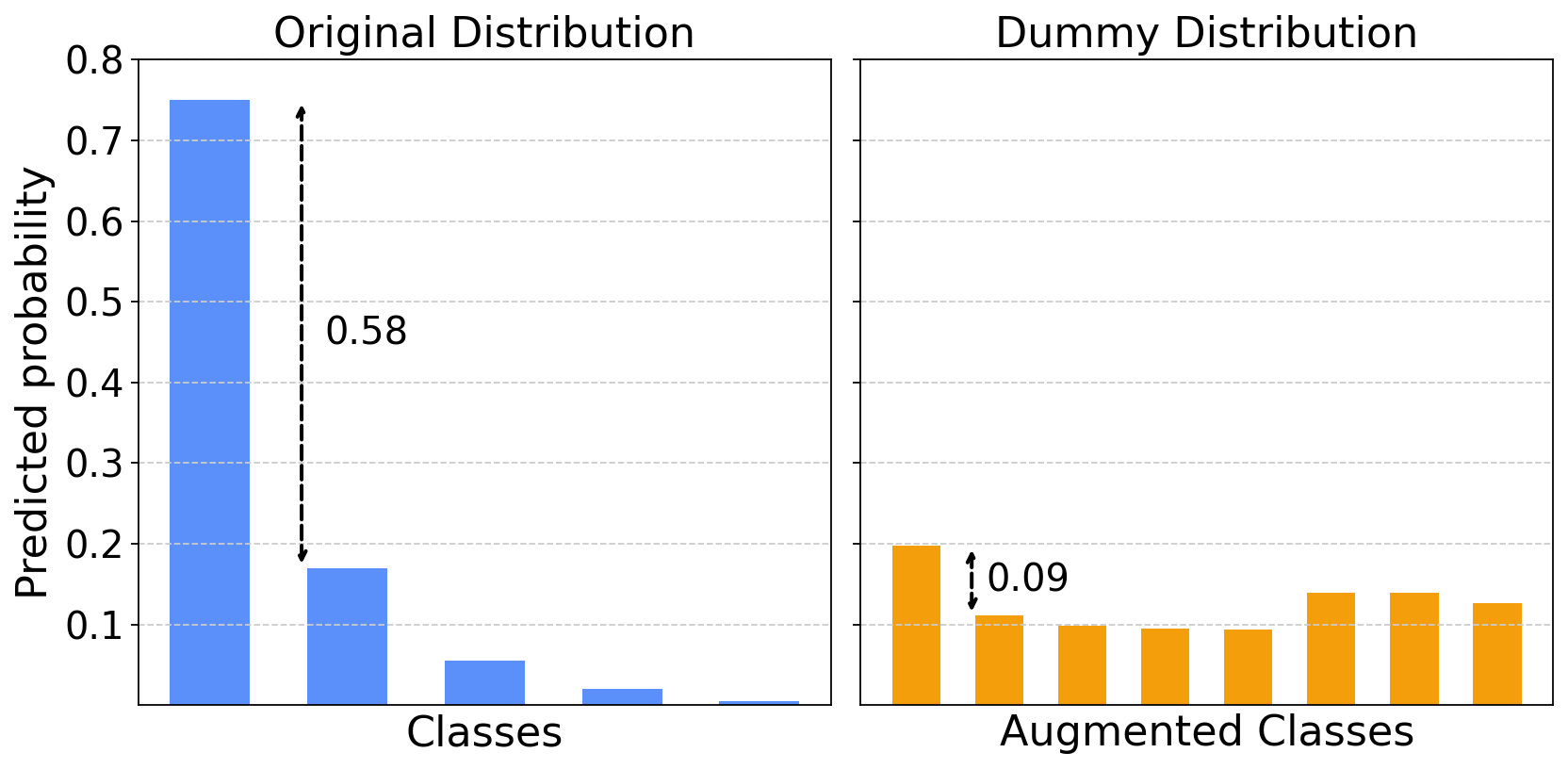}
        \caption{}
        \label{ProbabilityMargin}
    \end{subfigure}
    \caption{
    Left: Illustration of underfitting when using multi-expert CE surrogate loss proposed by \citet{DBLP:conf/aistats/VermaBN23} on ImageNet. We consider a MobileNet-v2 model and progressively introduce ``dog experts'', where each expert covers a domain consisting of 5 dog species, attaining $85\%$ accuracy on its domain, $75\%$ on the other dog species, and random guessing on remaining classes. Since the experts have non-overlapping domains, adding more experts strictly increases the aggregate accuracy of the expert set. We report the test accuracy of both the system and the classifier.
    Right: An illustration of the degraded distribution for a 5-class classification task. We present the predicted class-posterior probabilities for an instance in descending order before and after the introduction of three experts.}
    \label{illustrate}
    \vspace{-8pt}
\end{figure*}

While surrogate-based methods have proven effective in standard single-expert L2D settings, 
the phenomenon of classifier underfitting has been observed in a specific non-conventional scenario 
involving explicit deferral costs, 
where the weakened classifier significantly impairs the system's final accuracy and reliability. 
In this particular case, 
the issue arises from a redundant label-smoothing term
induced by the cost parameter
and has been successfully mitigated by specialized techniques \cite{DBLP:conf/nips/NarasimhanJMRK22,DBLP:conf/aistats/LiuCZF024}. 
Consequently, the standard cost-free setting is generally considered to be free from such issues.

Current research in the multi-expert L2D primarily focuses on selecting the optimal expert 
and typically operates under the conventional setting without extra deferral costs. 
However, we identify that classifier underfitting unexpectedly persists in this general multi-expert setting (as in Figure~\ref{ImageNet_CE_DogExpert}) 
despite the complete absence of deferral costs. 
Our analysis attributes this to the \emph{expert aggregation term} inherent to multi-expert objectives 
rather than the cost-induced smoothing in single-expert cases.
Since the underlying cause is fundamentally different, 
previous remedies %
are thus inapplicable, 
highlighting the need for alternative solutions.

To address these issues, we propose the \textit{\textbf{Pi}cking the \textbf{C}onfident and \textbf{C}orrect \textbf{E}xpert} (PiCCE)
loss formulation, which mitigates the underfitting issues caused by the expert aggregation term, by exploiting the empirical/ground-truth information.
We further provide theoretical guarantees for the proposed PiCCE from both optimization and statistical efficiency perspectives.
Our main contributions are:%
\begin{itemize}
    \item In Section \ref{sec:PiCCE}, 
    we identify that classifier underfitting persists in the general multi-expert setting even without deferral costs, 
    which contradicts the intuition from single-expert L2D. 
    We theoretically attribute this phenomenon to the \emph{expert aggregation term},
    which fundamentally differs from cost-induced label smoothing observed in prior literature.
    
    \item Given that the distinct cause of underfitting renders prior remedies ineffective,
    we introduce \textit{PiCCE} in Section \ref{sec4}, 
    a surrogate-based method that dynamically selects experts in a data-dependent manner.
    We demonstrate that it enjoys favorable optimization properties
    and resolves underfitting by significantly compressing the expert aggregation term.
    
    \item In Section \ref{sec5}, 
    we analyze the statistical consistency of PiCCE. 
    We prove that PiCCE is not only consistent 
    but also capable of accurately recovering both the underlying class probabilities and expert accuracies.
    
    \item We experimentally verify the underfitting caused by expert aggregation in Section \ref{Sec:experiment} and demonstrate the effectiveness of PiCCE using both synthetic and real-world experts.
\end{itemize}

\section{Preliminaries}
\label{sec:prelims}
In this section, we first introduce the problem formulation of L2D with multiple experts and existing solutions, then review the issue of underfitting under single-expert setting.

\subsection{Problem Formulation and Existing Losses for L2D with Multiple Experts}
\paragraph{Data Generating Distribution:} 
Denote by $\mathcal{X}\subseteq\mathbb{R}^d$ the feature space, $\mathcal{Y}\coloneqq[K]$ the label space, and $\mathcal{M}=\mathcal{Y}$ the expert prediction space. 
The expert prediction space for $J$ experts is $\mathcal{M}^{J}=[K]^{J}$, 
and $m_{j}$ is the prediction of the $j$-th expert for any $\bm{m}\in\mathcal{M}^{J}$.
The data generating distribution of L2D to multiple experts is $\mathcal{D}$ on $\mathcal{X}\times\mathcal{Y}\times\mathcal{M}^{J}$,
and the density of this distribution is $p(\bm{x},y,\bm{m})$.
We assume the training data set consists of $n$ i.i.d. samples drawn from $\mathcal{D}$.

\paragraph{Technical Notations:} Let us write $\eta_{y}(\bm{x})\coloneqq \Pr(Y=y \mid X=\bm{x})$.
The $i$-th expert's prediction accuracy at point $\bm{x}$ 
is denoted by 
$\text{Acc}_{i}(\bm{x}) := \text{Pr}(M_{i}=Y \mid X=\bm{x})$.
We define the $\am_{i\in\mathcal{I}}{\theta}_{i}\in\mathcal{I}$ that breaks ties arbitrarily and deterministically, and $\mathrm{Argmax}$ is the original set of maximum indices. 
Define the augmented label space as $\mathcal{Y}^{\bot} = \mathcal{Y}\cup \{\bot_1,\cdots,\bot_J\}$, where $\bot_j$ refers to the option of deferring to the $j$-th expert.

\paragraph{Problem Setup:} The aim of L2D with multiple experts is to learn a model $f:\mathcal{X}\rightarrow \mathcal{Y}^{\bot}$ with performance evaluated by the following \emph{target system loss} \cite{DBLP:conf/aistats/VermaBN23}:
\begin{align}
\label{01loss}
\ell_{01}^{\bot} (f(\bm{x}),y,\bm{m}):=
\begin{cases}
    \indicator{f(\bm{x})\neq y},&\text{if $f(\bm{x})\in \mathcal{Y}$}, \\
    \indicator{m_j\neq y}, &\text{if $f(\bm{x})=\bot_j$},\\
\end{cases}
\end{align}
where $\indicator{f(\bm{x})\neq y}$ and $\indicator{m_j\neq y}$ represent the zero-one losses of the classifier and the $j$-th expert, respectively, taking value of 1 if the corresponding prediction is incorrect. 
The \emph{risk minimization problem} w.r.t. \eqref{01loss} is defined as below,
\begin{align}
\label{01risk}
    \min\nolimits_{f} R_{01}^{\bm{\bot}}(f)=
    \mathbb{E}_{p(\bm{x},y,\bm{m})}[\ell_{01}^{\bot} (f(\bm{x}),y,\bm{m})],
\end{align}
where the target risk $R_{01}^{\bm{\bot}}(f)$ is the misclassification error of the whole L2D system over $\mathcal{D}$.
Its \emph{Bayes optimal solution} is
\begin{align}
\label{01lossBayessolution}
f^*(\bm{x}) & = 
\begin{cases}
\bot_{j^*},
~~~~~~~~~~\text{if $\text{Acc}_{j^*}(\bm{x}) \geq  
 \eta_{y^*}(\bm{x})$},\\
\am\nolimits_{y\in[K]} \eta_{y}(\bm{x}),~~\text{else},
\end{cases}
\end{align}
where $y^* := \am_{y\in[K]} \eta_y(\bm{x})$ denotes the optimal label, and $j^* := \am_{j\in[J]} \text{Acc}_{j}(\bm{x}) $ is the most accurate expert.
Intuitively, the optimal L2D system triggers deferral options when an expert outperforms the classifier, ensuring each decision is handled by the most competent entity.
\paragraph{Existing Loss Frameworks:}
Given the discontinuous nature of the multi-expert risk \eqref{01risk}, which is similar to the single-expert case as the multi-expert setting follows the same development, surrogate losses have been proposed to make the optimization problem tractable.
While \citet{DBLP:conf/aaai/HemmerTVJ023} introduced a mixture of experts method, \citet{DBLP:conf/aistats/VermaBN23} showed that it is inconsistent and overcame this limitation by generalizing single-expert CE \cite{DBLP:conf/icml/MozannarS20} and OvA \cite{DBLP:conf/icml/VermaN22} surrogate losses to the multiple-expert setting, which
can be further unified into a general framework  \citep{DBLP:conf/nips/MaoMM023}:
\begin{align}
\ell_{\phi}(\bm{\theta},\!y,\!\bm{m}) & \coloneqq \phi(\bm{\theta},y)
\!+\! \sum\nolimits_{j=1}^{J}\!\!\indicator{m_{j}\!=\!y} \, \phi(\bm{\theta},j\!+\!K)
\label{systemloss}
\end{align}
where $\phi:\mathbb{R}^{K+J}\times[K+J]\rightarrow\mathbb{R}_{\geq0}$ is a multiclass loss function 
and $\bm{\theta} \in \mathbb{R}^{K+J}$ denotes the \emph{score vector} produced by the model for input $\bm{x}$ via a scoring function $g:\mathcal{X}\rightarrow \mathbb{R}^{K+J}$, i.e., $g(\bm{x})=\bm{\theta}$. 
For brevity, we omit the dependence of $\bm{\theta}$ on $\bm{x}$ in the rest of this paper, w.l.o.g., since the risk is defined pointwise as an expectation over $\bm{x}$.
Note that \eqref{systemloss} is consistent provided that the multi-class surrogate loss $\phi$ is consistent \cite{DBLP:conf/icml/CharusaieMSS22,DBLP:conf/isaim/MaoMZ24}, i.e., minimizing the expected surrogate risk w.r.t. \eqref{systemloss} recovers the Bayes optimal solution \eqref{01lossBayessolution}.
The L2D system $f$ is implemented via the composition $\varphi \circ g$,
i.e., $f(\bm{x}) = (\varphi\circ g)(\bm{x})$,
where $\varphi:\mathbb{R}^{K+J}\rightarrow\mathcal{Y}^{\bot}$ is a prediction link defined as:
\begin{align}
\varphi(\bm{\theta}) &= 
\begin{cases}
\am\nolimits_{y} \theta_{y},
&
\text{if $\am\nolimits_{y} \theta_y \in [K]$},\\
\bot_{(\am_y\theta_y-K)}, & \text{else}.
\end{cases}
\label{bayes_link}
\end{align}

\subsection{Underfitting Issues in L2D}
\label{Subsec:underfitting in single-expert}

When $J=1$, the above problem reduces to the single-expert L2D setting \cite{DBLP:conf/icml/MozannarS20,DBLP:conf/icml/VermaN22,DBLP:conf/nips/CaoM0W023}.
A special studied case further incorporates a non-zero deferral cost $c$ \cite{DBLP:conf/nips/NarasimhanJMRK22,DBLP:conf/aistats/LiuCZF024}, 
reflecting the practical constraints that expert consultation often incurs additional computational or monetary expenses.
Under this formulation, compared with \eqref{01loss}, the expert-related loss becomes $\indicator{m \not= y}+c$.
Consequently, the optimal system defers only when the expert's accuracy exceeds the classifier's by at least a margin of $c$.

However, the inclusion of $c > 0$ has been found to trigger classifier underfitting \cite{DBLP:conf/nips/NarasimhanJMRK22,DBLP:conf/aistats/LiuCZF024}.
This issue is largely rooted in the learning objective's structure: specifically, \citet{DBLP:conf/nips/NarasimhanJMRK22} identified that in both CE \cite{DBLP:conf/icml/MozannarS20} and OvA-based \cite{DBLP:conf/icml/VermaN22} surrogates, the presence of $c$ introduces a (redundant) label-smoothing term \cite{DBLP:conf/cvpr/SzegedyVISW16} that hampers the classifier's learning.
Furthermore, \citet{DBLP:conf/aistats/LiuCZF024} showed that this cost tends to induce a flattened label distribution that scales with the number of classes $K$, making the classifier more likely to incorrectly swap the optimal label with a competing one.

To mitigate these $c$-induced underfitting issues,
a post-hoc estimator was proposed by \citet{DBLP:conf/nips/NarasimhanJMRK22}.
Inspired by the theoretical success of the end-to-end approaches \cite{DBLP:conf/icml/MozannarS20,DBLP:conf/icml/VermaN22},
\citet{DBLP:conf/aistats/LiuCZF024} further provided a one-staged loss framework to eliminate the label-redundant term by utilizing the intermediate learning results.

\section{More Experts, Worse Performance: A New Underfitting Challenge}
\label{sec:PiCCE}
In Section~\ref{Subsec:underfitting in single-expert}, we reviewed the underfitting issues under the single-expert with non-zero deferral costs setting. We now show that this underfitting problem persists in the multi-expert scenario even without additional deferral costs, %
and show that existing methods and seemingly plausible extensions based on them failed in this case.
\subsection{Multi-Expert Can Cause Underfitting}

We focus on the multi-expert L2D framework \eqref{systemloss} in the standard setting without additional deferral costs. 
Under Bayes optimality \eqref{01lossBayessolution}, the system always selects the most accurate predictor from the pool of experts and the classifier. 
Consequently, a larger expert set should never lead to a decrease in optimal performance.
For instance, consider two expert sets $\mathcal{E}_1 \subset \mathcal{E}_2$, the best-in-set accuracy satisfies:
\begin{align*}
\max\nolimits_{j\in \mathcal{E}_2}\text{Acc}_j (\bm{x}) \geq \max\nolimits_{j\in \mathcal{E}_1}\text{Acc}_j (\bm{x}),
\end{align*}
which implies that the optimal L2D system induced from $\mathcal{E}_{2}$ will be better than or equal to the one from $\mathcal{E}_{1}$.

However, the empirical evidence in Figure~\ref{ImageNet_CE_DogExpert} 
reveals an unexpected inversion of this analysis:
expanding the expert set induces \textit{underfitting} in the base classifier, 
which subsequently leads to a decline in the overall system accuracy. 
This indicates that a larger and more capable expert set can actually compromise the learning process, causing the system to underperform as more experts are introduced.

This finding is particularly notable because 
\eqref{systemloss} is free from redundant label-smoothing and 
should have avoided underfitting \cite{DBLP:conf/nips/NarasimhanJMRK22, DBLP:conf/aistats/LiuCZF024}. 
We thus identify a novel multi-expert underfitting challenge that contradicts these existing conclusions.

How can a more capable expert set induce such underfitting? 
To uncover the underlying mechanism, 
we begin by analyzing the risk w.r.t. \eqref{systemloss}.
\begin{align*}
    R_{\ell_{\phi}\mid \bm{x}}(\bm{\theta})
    \!=\! \sum_{y=1}^K \eta_y(\bm{x})\phi(\bm{\theta},y)
    \!+\! \sum_{j=1}^J \text{Acc}_j(\bm{x})\phi(\bm{\theta},j \!+\! K).
\end{align*}
The above risk is closely related to a $(K+J)$-class classification problem over dummy distribution $\widehat{p}$, where the augmented class probabilities for a given $\bm{x}$ is $\widehat{\bm{\eta}}(\bm{x})\!=\![\widehat{p}(1 \!\mid \!\bm{x}),\!\cdots\!,\widehat{p}(K \!\mid\! \bm{x}),\widehat{p}(\bot_1 \!\mid\! \bm{x}), \!\cdots\!, \widehat{p}(\bot_J \!\mid\! \bm{x})]$, and
\begin{align}
\label{dummy_1}
\widehat{p}(y|\bm{x})=\frac{\eta_y(\bm{x})}{1+\mathcal{A}(\bm{x})}, 
\quad
\widehat{p}(\bot_j|\bm{x})=\frac{\text{Acc}_j(\bm{x})}{1+\mathcal{A}(\bm{x})},
\end{align}
where \emph{the expert aggregation term} $\mathcal{A}(\bm{x})=\sum_{j=1}^J \text{Acc}_j(\bm{x})$ captures the sum of experts' accuracy. 
Observe that the expert aggregation term $\mathcal{A}(\bm{x})$ scales as $\mathcal{O}(J)$ with the number of experts. 
Unlike the single-expert case where $\mathcal{A}(\bm{x}) = \mathcal{O}(1)$, 
this multi-expert term increasingly flattens the label distribution $\widehat{p}(y|\bm{x})$. 
As Figure~\ref{ProbabilityMargin} illustrates, 
this flattening narrows the margin between the optimal label and its competitors, 
making the ground truth harder to identify and eventually triggering underfitting. 

In short, flattened label distribution arises inevitably in multi-expert settings, 
which is different from single-expert cases. 
Consequently, this inherent flattening reveals that existing losses like CE and OvA \cite{DBLP:conf/aistats/VermaBN23} will suffer from underfitting 
due to their intrinsic objective structure rather than external assumptions, e.g., deferral costs.

\subsection{Failure of Merely Using Intermediate Results}
As established in Section 3.1, 
underfitting in multi-expert L2D stems from the expert aggregation term rather than redundant label smoothing. 
Consequently, existing underfitting-resistant approaches 
\cite{DBLP:conf/aistats/LiuCZF024} designed to eliminate label smoothing fail to resolve the underfitting inherent to the multi-expert setting, which necessitates new methods tailored for the multi-expert underfitting.

Since the aggregation term $\mathcal{A}(\bm{x}) = \mathcal{O}(J)$ is the primary cause of underfitting, 
a direct solution is to reduce the multi-expert system to a single-expert setup. 
To test this logic, consider an idealized scenario where the most accurate expert $j^*$ for an instance is known. 
In this case, comparing the classifier solely against $j^*$ substitutes $\mathcal{A}(\bm{x})$ with $\text{Acc}_{j^*}(\bm{x})$, 
which preserves Bayes optimality while immediately eliminating the label-flattening effect. 
Although $j^*$ is inaccessible in practice, existing work \cite{DBLP:conf/aistats/LiuCZF024} suggests that intermediate learning results can serve as a proxy of the optimal expert, 
i.e., utilizing model’s predictions at intermediate stages of the training process instead of the true optimal expert $j^*$.
This motivates the Pick the Confident Expert method below following the logic of \cite{DBLP:conf/aistats/LiuCZF024}:
\begin{align}
\label{noncontinousloss}
    \widetilde{\ell}^{\circ}_{\phi}(\bm{\theta},y,\bm{m}) := \phi(\bm{\theta},y)+
    \indicator{m_{\widehat{j}^*} \!=\! y} \phi(\bm{\theta},\widehat{j}^*\!+\!K),
\end{align}
where $\widehat{j}^*=\am_{j\in[J]} \theta_{j+K}$ 
is an estimator of the optimal expert, and the corresponding conditional risk is:
\begin{align*}
    R_{\widetilde{\ell}^{\circ}_{\phi}\mid \bm{x}}(\bm{\theta}) \!=\! 
    \sum_{y=1}^K \! \eta_y(\bm{x})\phi(\bm{\theta},y) \!+\! \text{Acc}_{\widehat{j}^*}(\bm{x})\phi(\bm{\theta},\widehat{j}^* \!+\! K).
\end{align*}
By reducing the expert aggregation in \eqref{systemloss} to a single estimated optimal expert term, this new formulation \eqref{noncontinousloss} induces a less flattened (dummy) label distribution $p'$:
\begin{align}
\label{dummy_pice}
p'(y|\bm{x})=\frac{\eta_y(\bm{x})}{1\!+\!\ac_{\widehat{j}^*}(\bm{x})}, 
p'(\bot_j|\bm{x})=\frac{\text{Acc}_j(\bm{x})}{1\!+\!\ac_{\widehat{j}^*}(\bm{x})}.
\end{align}
However, despite its success in mitigating label flattening, this seemingly reasonable solution \emph{fails} from an optimization perspective 
because it lacks \emph{continuity}, a fundamental property of any viable surrogate loss. 
Specifically, the indicator term in \eqref{noncontinousloss} is non-constant and depends on the scores $\bm{\theta}$ through the discrete estimator $\widehat{j}^*$. 
Consequently, any shift in the selected expert $\widehat{j}^*$ induces a step discontinuity, 
thereby precluding effective gradient-based optimization.

\section{PiCCE: Using Both Intermediate and Empirical Results}
\label{sec4}
According to the discussion in Section \ref{sec:PiCCE},
the multi-expert L2D paradigm is naturally prone to underfitting compared with single-expert L2D%
, while existing surrogate-based strategy for mitigating underfitting fails to achieve continuity,
which violates the essential premise of surrogate loss design.

In this section, we show that this new challenge can be effectively solved 
by \textit{exploiting the ground-truth information,}
i.e., considering both the confidence and the \textit{empirical correctness} of the experts,
in the surrogate loss design.
The further integration of ground-truth information not only enables continuous surrogates, 
but also greatly mitigates the unique phenomenon of underfitting caused by expert aggregation in multi-expert L2D.

\subsection{Regulating Confident Experts with Ground-truth}

To achieve a continuous surrogate framework, let's analyze the reason for the discontinuity of \eqref{noncontinousloss} first.
Revisiting the definition of expert selector $\widehat{j}^{*}$ in \eqref{noncontinousloss} ,
it is noticeable that the selection ranges over the complete set of experts $[J]$, which encompasses both accurate and erroneous predictions.
When the most confident expert shifts among this full candidate set,
\eqref{noncontinousloss} may encounter step-like discontinuities due to the indicator function $\indicator{m_{\widehat{j}^*}=y}$,
if the correctness of selected experts varies after the shift. 

This observation raises a concern: 
is it really helpful to consider the whole expert set?
According to the discussion above, 
choosing the full set $[J]$ indiscriminately 
contributes to the step-like discontinuities inherent in \eqref{noncontinousloss},
which suggests the potential benefit of \textbf{constraining the expert set}.
Furthermore, it is also intuitive to prune the expert set 
based on \textbf{empirical observations} of their performance, 
thereby focusing the selections on high-accuracy experts.

Therefore, a natural idea is to modify the expert selector 
based on the intuition of regulating the candidate set 
with empirical evidences of expert accuracy.
Despite potential concerns regarding the cost of acquiring such evidence, 
the ground-truth $y$ and expert predictions $m_j$ 
serve as surprisingly straightforward sources. 
Specifically, they constitute the unbiased estimator $\indicator{m_{j}=y}$ of expert accuracy $\text{Acc}_{j}$. 
Moreover, this incurs zero additional cost, 
given that they are readily available within the standard L2D training set.

Motivated by this empirical correctness evidence $\indicator{m_j=y}$, 
we proceed to construct a more compact expert selector: confining the expert set to the correct experts $\{j\in[J]:m_j=y\}$,
and selecting the most confident expert in this set.
Compared with the pick the confident expert method \eqref{noncontinousloss} that chooses from the full set $[J]$,
our proposed new selector operates in a manner of 
\textit{\textbf{Pi}cking the \textbf{C}onfident and \textbf{C}orrect \textbf{E}xpert} (PiCCE), which induces the following surrogates:
\begin{definition}[PiCCE] Denote by $[\bm{m}=y]$ the set $\{j\in[J]:m_j=y\}$. For any multiclass loss $\phi:\mathbb{R}^{K+J}\times[K+J]\rightarrow\mathbb{R}_{\geq 0}$, our loss $\widetilde{\ell}_{\phi}:\mathbb{R}^{K+J}\times\mathcal{Y}\times\mathcal{M}^{J}\rightarrow\mathbb{R}_{\geq0}$ is:
\begin{align}
\label{Loss_PiCCE}
    \widetilde{\ell}_{\phi}(\bm{\theta},y,\bm{m})=\phi(\bm{\theta},y)+\phi\Big(\bm{\theta},\am\limits_{j\in[\bm{m}=y]} \theta_{j+K}+K\Big).
\end{align}
The second term is 0 if $[\bm{m}=y] = \emptyset$.
\end{definition}
The most intuitive difference between PiCCE \eqref{Loss_PiCCE} and the discontinuous loss \eqref{noncontinousloss} is that
\eqref{Loss_PiCCE} \textbf{removes the multiplicative indicator term.}
However, this is not the result of manual exclusion, 
but a natural consequence of the transition of candidate expert sets:
for any expert $j\in[\bm{m}=y]$,
the indicator term $\indicator{m_{j}=y}$ equals 1,
and thus naturally integrates into the formulation \eqref{Loss_PiCCE}.
The benefit of this distinction is obvious since it removes the potential step-like discontinuities, 
which enables the construction of continuous surrogates:
\begin{theorem}[Continuity of PiCCE] \label{Thm_Continuity}The proposed formulation \eqref{Loss_PiCCE} is continuous if $\phi$ is continuous and is symmetric \textit{w.r.t.} its last $J$ inputs, 
i.e.,
$P\bm{\phi}(\bm{\theta})=\bm{\phi}(P\bm{\theta})$
for permutation matrices $P\in\mathbb{R}^{K+J\times K+J}$ that $P_{i,i}\!=\!1$ for $i\in[K]$,
and $\bm{\phi}(\bm{\theta})=[\phi(\bm{\theta},1),\cdots,\phi(\bm{\theta},K\!+\!J)]^{\top}$.
\end{theorem}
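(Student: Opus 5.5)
Fix $y\in\mathcal{Y}$ and $\bm{m}\in\mathcal{M}^J$, and prove continuity of $\bm{\theta}\mapsto\widetilde{\ell}_{\phi}(\bm{\theta},y,\bm{m})$ on $\mathbb{R}^{K+J}$. The first term $\phi(\bm{\theta},y)$ is continuous by assumption. If $[\bm{m}=y]=\emptyset$, the second term is identically zero, so we are done. Otherwise let $S:=[\bm{m}=y]\neq\emptyset$ and define
\begin{align*}
h(\bm{\theta}) := \phi\Big(\bm{\theta},\am\nolimits_{j\in S}\theta_{j+K}+K\Big).
\end{align*}
The task reduces to showing $h$ is continuous. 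The selected index is locally constant wherever the maximizer over $S$ is unique, so $h$ coincides locally with a single continuous function $\phi(\cdot,j+K)$ there. The only difficulty is at points where ties occur.

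\textbf{Key step: symmetry forces agreement on ties.} Take $\bm{\theta}$ with $\theta_{i+K}=\theta_{j+K}$ for some $i\neq j$. Let $P$ be the permutation matrix swapping coordinates $i+K$ and $j+K$; it fixes the first $K$ coordinates, so it is admissible. Since $P\bm{\theta}=\bm{\theta}$, the hypothesis $P\bm{\phi}(\bm{\theta})=\bm{\phi}(P\bm{\theta})=\bm{\phi}(\bm{\theta})$ gives, reading coordinate $i+K$,
\begin{align*}
\phi(\bm{\theta},j+K)=\phi(\bm{\theta},i+K).
\end{align*}
Hence on the closed set where $i,j$ are tied, $\phi(\cdot,i+K)=\phi(\cdot,j+K)$. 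Consequently $h(\bm{\theta})=\phi(\bm{\theta},k+K)$ for every $k\in\mathrm{Argmax}_{j\in S}\theta_{j+K}$. In particular, the arbitrary deterministic tie-breaking in $\am$ does not affect the value.

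\textbf{Concluding continuity.} Fix $\bm{\theta}^0$ and a sequence $\bm{\theta}^n\to\bm{\theta}^0$. By continuity of the coordinates, there is $N$ such that for $n\ge N$ every maximizer $k_n$ of $\theta^n_{j+K}$ over $S$ lies in $A_0:=\mathrm{Argmax}_{j\in S}\theta^0_{j+K}$, since strict inequalities at $\bm{\theta}^0$ persist nearby. Then
\begin{align*}
|h(\bm{\theta}^n)-h(\bm{\theta}^0)|\le \max_{k\in A_0}\big|\phi(\bm{\theta}^n,k+K)-\phi(\bm{\theta}^0,k+K)\big|,
\end{align*}
because $h(\bm{\theta}^0)=\phi(\bm{\theta}^0,k+K)$ for every $k\in A_0$ by the key step. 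The right-hand side tends to $0$ by continuity of each $\phi(\cdot,k+K)$ over a finite set of indices.

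The main obstacle is the tie case, and it is handled entirely by the invariance argument with the transposition $P$. The step to check with care is that $P$ acting on $\bm{\phi}$ really yields equality of the two specific coordinates. This holds because the swap fixes $\bm{\theta}$ while permuting the entries of $\bm{\phi}(\bm{\theta})$.
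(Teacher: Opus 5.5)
Your proof is correct and takes essentially the same route as the paper's. Near $\bm{\theta}^0$, every selected expert lies in the tied maximizer set $\mathrm{Argmax}_{j\in[\bm{m}=y]}\theta^0_{j+K}$; symmetry of $\phi$ makes $\phi(\bm{\theta}^0,k+K)$ constant over that set, and continuity of each $\phi(\cdot,k+K)$ finishes the argument. You fold the paper's two cases (unique versus tied maximizer) into one and make explicit the transposition argument that the paper only asserts in its Case 2.
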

The symmetry requirement on the base multiclass loss $\phi$ is mild,
and it covers most multiclass losses used in existing multi-expert L2D losses \citep{DBLP:conf/aistats/VermaBN23,DBLP:conf/isaim/MaoMZ24}.
For example, cross-entropy loss is a natural choice that satisfies the symmetry, 
which is used in the multi-expert CE loss \citep{DBLP:conf/aistats/VermaBN23,DBLP:conf/icml/MozannarS20}.
Similarly, the base loss for the multi-expert OvA loss, as we will show in Section \ref{sec5}, also satisfies the symmetry.

\subsection{Underfitting-resistance of PiCCE}
While we have shown that the refinement of candidate expert set 
yields continuity, which is beneficial from an optimization perspective,
a more critical issue lies in its statistical efficiency, 
in particular its robustness against underfitting,
which constitutes the primary target of our design.

In this section, we show that PiCCE is indeed 
\textit{underfitting-resistant} by demonstrating its dummy distribution.
To begin with, we first analyze the risk formulation of PiCCE:

\begin{lemma}[Risk of PiCCE]\label{Thm_PiCCE_risk}Suppose $\phi$ is symmetric \textit{w.r.t.} its last $J$ inputs. 
Denote by $\bm{\sigma}$ any permutation of $[J]$ such that
$\theta_{\sigma_{1}+K}\geq\cdots\geq \theta_{\sigma_{J}+K}$.
Then the risk of PiCCE is:
\begin{align}
			\label{risk_PiCCE}
			R_{\widetilde{\ell}_{\phi} \mid \bm{x}}(\bm{\theta})
			\!\!=\!\!\!\sum_{y=1}^{K}\!\eta_{y}(\bm{x})\phi(\bm{\theta},\!y)\!+\!\!\sum_{j=1}^{J}\!A_{\bm{\sigma}}^{j}(\bm{x})\phi(\bm{\theta},\!\sigma_j\!+\!K)
\end{align}
where $A_{\bm{\sigma}}^{j}(\bm{x})=\pr(M_{\sigma_{1:j-1}}\!\not=\!Y,M_{\sigma_j}\!=\!Y |X\!=\!x)$.
\end{lemma}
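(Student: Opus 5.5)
We prove Lemma~\ref{Thm_PiCCE_risk} by computing the conditional expectation of \eqref{Loss_PiCCE} given $X=\bm{x}$ term by term.

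\textbf{First term.} Since $\phi(\bm{\theta},y)$ depends only on $y$, its conditional expectation is $\sum_{y=1}^K\eta_y(\bm{x})\phi(\bm{\theta},y)$, as for \eqref{systemloss}.

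\textbf{Second term: partitioning the sample space.} Fix $\bm{\theta}$ and a permutation $\bm{\sigma}$ with $\theta_{\sigma_1+K}\geq\cdots\geq\theta_{\sigma_J+K}$. For each realization $(y,\bm{m})$ with $[\bm{m}=y]\neq\emptyset$, let $k(y,\bm{m})$ be the smallest index $k$ such that $m_{\sigma_k}=y$. The events
\[
E_j=\{M_{\sigma_{1:j-1}}\neq Y,\; M_{\sigma_j}=Y\}, \qquad j\in[J],
\]
together with the event $\{[\bm{M}=Y]=\emptyset\}$, partition the sample space. On that last event the second term is $0$ by definition. On $E_j$ we would like the second term to equal $\phi(\bm{\theta},\sigma_j+K)$. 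Then
\[
\mathbb{E}\!\left[\phi\Big(\bm{\theta},\am_{j\in[\bm{M}=Y]}\theta_{j+K}+K\Big)\,\middle|\,X=\bm{x}\right]=\sum_{j=1}^J\pr(E_j\mid X=\bm{x})\,\phi(\bm{\theta},\sigma_j+K),
\]
and $\pr(E_j\mid X=\bm{x})$ is exactly $A^j_{\bm{\sigma}}(\bm{x})$, which yields \eqref{risk_PiCCE}.

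\textbf{Main obstacle: ties.} The identification on $E_j$ holds immediately when there are no ties. Then $\sigma_k$ is the unique maximizer of $\theta_{j+K}$ over $[\bm{m}=y]$, so $\am_{j\in[\bm{m}=y]}\theta_{j+K}=\sigma_k$.

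With ties, the deterministic $\am$ may select some other $j'\in[\bm{m}=y]$ with $\theta_{j'+K}=\theta_{\sigma_k+K}$, and $\bm{\sigma}$ is an arbitrary ordering. This is where the symmetry assumption enters. Let $P$ be the permutation matrix that transposes coordinates $j'+K$ and $\sigma_k+K$ and fixes $[K]$. Since $\theta_{j'+K}=\theta_{\sigma_k+K}$, we have $P\bm{\theta}=\bm{\theta}$. Symmetry then gives
\[
P\bm{\phi}(\bm{\theta})=\bm{\phi}(P\bm{\theta})=\bm{\phi}(\bm{\theta}),
\]
so $\phi(\bm{\theta},j'+K)=\phi(\bm{\theta},\sigma_k+K)$.

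Hence on $E_k$ the loss value is $\phi(\bm{\theta},\sigma_k+K)$ regardless of how ties are broken. This also shows the formula holds for every admissible $\bm{\sigma}$, which completes the proof.
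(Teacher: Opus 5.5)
Your proof is correct and follows essentially the same route as the paper: you split off the first term, decompose the second term by the first correct expert along a descending ordering $\bm{\sigma}$ (the events $E_j$), and use the symmetry of $\phi$ to neutralize ties. Your pointwise transposition argument on each $E_k$ handles ties more cleanly than the paper, which averages over the set $\Sigma(\bm{\theta})$ of descending permutations with weights $\pr(\bm{\sigma})$ and then argues that every inner sum is the same.
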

This risk is also a weighted sum of multiclass losses,
which allows an analysis based on its corresponding $(K+J)$-class dummy distribution $\widetilde{p}$.
Specifically, its label counterpart ($y\in[K]$) can be formulated as:
$$\widetilde{p}(y|\bm{x})=\frac{\eta_{y}(\bm{x})}{\sum\limits_{y=1}^{K}\eta_y(\bm{x})+\sum\limits_{j=1}^{J}A_{\bm{\sigma}}^{j}(\bm{x})}=\frac{\eta_{y}(\bm{x})}{1+\sum\limits_{j=1}^{J}A_{\bm{\sigma}}^{j}(\bm{x})}$$
While the dummy distribution can be explicitly formulated, the denominator $1+\sum_{j=1}^{J}A_{\bm{\sigma}}^{j}(\bm{x})$ is less intuitive than those in \eqref{dummy_1} and \eqref{dummy_pice}. 
Specifically, the aggregation term $\mathcal{A}(\bm{x})$ in \eqref{dummy_1} clearly scales as $\mathcal{O}(J)$ because it represents the sum of expert accuracies, while the corresponding term $\ac_{\widehat{j}^*}(\bm{x})$ in \eqref{dummy_pice} is simply $\mathcal{O}(1)$. In contrast, the term $\sum_{j=1}^{J}A_{\bm{\sigma}}^{j}(\bm{x})$ is significantly harder to quantify. 
While it superficially appears to be $\mathcal{O}(J)$ as a summation of $J$ terms, 
the internal dependencies between the $A_{\bm{\sigma}}^{j}(\bm{x})$ components suggest 
the potential for further simplification. 
This lack of transparency complicates the study of underfitting, particularly when quantifying the degree of label distribution flattening.

Fortunately, the lemma below shows $\sum_{j=1}^{J}\!A_{\bm{\sigma}}^{j}(\bm{x})$ can be simplified by exploiting their dependencies:
\begin{lemma}\label{Lemma_PiCCE_Underfitting}For any permutation $\bm{\sigma}$ and $\bm{x}\in\mathcal{X}$:
\begin{align}
1\!+\!\sum_{j=1}^{J}A_{\bm{\sigma}}^{j}(\bm{x})= 1\!+\!\mathrm{Pr}\Big(\mathop{\bigcup}\limits_{j\in[J]} M_{j}=Y \mid X=\bm{x}\Big).
\end{align}
\end{lemma}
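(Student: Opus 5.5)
The plan is to show that the events underlying the terms $A_{\bm{\sigma}}^{j}(\bm{x})$ partition the event that at least one expert is correct, and then apply additivity of conditional probability; adding $1$ to both sides then gives the stated identity. Fix $\bm{x}$ and a permutation $\bm{\sigma}$ of $[J]$. For $j\in[J]$, define the events
\begin{align*}
E_j := \{M_{\sigma_1}\neq Y,\dots,M_{\sigma_{j-1}}\neq Y,\ M_{\sigma_j}=Y\},
\end{align*}
so that $A_{\bm{\sigma}}^{j}(\bm{x})=\mathrm{Pr}(E_j\mid X=\bm{x})$ by the definition in Lemma~\ref{Thm_PiCCE_risk}. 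In words, $E_j$ is the event that $\sigma_j$ is the first correct expert when the experts are scanned in the order $\sigma_1,\sigma_2,\dots,\sigma_J$.

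The first step is to check that the $E_j$ are pairwise disjoint. Take $i<j$. On $E_j$ we have $M_{\sigma_i}\neq Y$, because $i\le j-1$. On $E_i$ we have $M_{\sigma_i}=Y$. Hence $E_i\cap E_j=\emptyset$.

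The second step is to show that $\bigcup_{j}E_j=\bigcup_{j\in[J]}\{M_j=Y\}$.
\begin{itemize}
\item Each $E_j\subseteq\{M_{\sigma_j}=Y\}$, which gives the inclusion $\subseteq$.
\item For the reverse inclusion, take an outcome in which some expert is correct. Let $j$ be the smallest index with $M_{\sigma_j}=Y$. Then all of $\sigma_1,\dots,\sigma_{j-1}$ are incorrect, so the outcome lies in $E_j$.
\end{itemize}
Since $\bm{\sigma}$ is a bijection of $[J]$, the union over $\sigma_j$ is the same as the union over all $j\in[J]$. This is exactly where the claim ``for any permutation'' is used.

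The final step applies finite additivity of $\mathrm{Pr}(\cdot\mid X=\bm{x})$ over the disjoint $E_j$, which gives
\begin{align*}
\sum_{j=1}^{J}A_{\bm{\sigma}}^{j}(\bm{x})=\mathrm{Pr}\Big(\bigcup_{j\in[J]} M_j=Y\mid X=\bm{x}\Big).
\end{align*}
Adding $1$ to both sides yields the lemma.

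There is no substantive obstacle. The only point needing care is the degenerate index $j=1$: the prefix condition is empty there, so $E_1=\{M_{\sigma_1}=Y\}$, and the minimal-index argument still covers it. A useful consequence to note is that the right-hand side does not depend on $\bm{\sigma}$ and is at most $1$. This is what makes the denominator $\mathcal{O}(1)$, and it underlies the underfitting-resistance claim that follows.
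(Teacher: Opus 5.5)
Your proposal is correct and follows essentially the same route as the paper: both write $A_{\bm{\sigma}}^{j}(\bm{x})$ as the probability of the event that $\sigma_j$ is the first correct expert in the order $\bm{\sigma}$, sum these to get $\mathrm{Pr}\big(\bigcup_{j} M_{\sigma_j}=Y \mid X=\bm{x}\big)$, and re-index via the bijection $\bm{\sigma}$. The paper asserts the disjoint-union identity in a single line (writing the $1$ as $\sum_{y}\eta_y$), whereas you verify disjointness and coverage explicitly.
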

Compared with dummy distribution \eqref{dummy_1} that severely flattens the label distribution 
since $\mathcal{A}(\bm{x})$ can increase up to $J$,
Lemma \ref{Lemma_PiCCE_Underfitting} indicates that PiCCE successfully resolves this issue by compressing the sum of expert accuracy $\mathcal{A}(\bm{x})=\mathcal{O}(J)$ into $\mathrm{Pr}\Big(\mathop{\bigcup}\limits_{j\in[J]} M_{j}=Y \mid X=\bm{x}\Big)=\mathcal{O}(1)$,
which leads to a dummy distribution that remains informative with increasing expert number, and thus is free from the multi-expert underfitting issue. 
In the next section, we further analyze the statistical consistency of PiCCE, i.e., if the minimization of the risk \eqref{risk_PiCCE} can recover the Bayes optimal solution for multi-expert L2D \eqref{01lossBayessolution}.

\section{Consistency Guarantee}
\label{sec5}
In Section \ref{s51}, we first show that the classifier counterpart in the L2D system is guaranteed to be consistent, 
i.e., the label with the highest score is always the most probable label.
Then we further justify the consistency of the whole L2D system under a mild condition.
Besides, our analyses focus on the following two multiclass losses:
\begin{align}
    \label{ce}
    \phi(\bm{\theta},y)=-\log\mathrm{softmax}(\bm{\theta})_y,
\end{align}
where $\mathrm{softmax}(\bm{\theta})_y=\frac{\exp(\theta_{y})}{\sum_{y=1}^{K+J}\exp(\theta_{y})}$,
\begin{align}
    \label{ova}
    \phi(\bm{\theta},y)\!=\!\!
    \begin{cases}
        \!-\!\log s(\theta_{y})+\log\left(1\!-\!s(\theta_{y})\right),y>K,\\
        \!-\!\log s(\theta_{y})\!-\!\!\!\sum\limits_{y'\!\not=y}\!\log\left(1\!-\!s(\theta_{y'})\right),\text{else,}
    \end{cases}
\end{align}
where $s(x)=\frac{1}{1+\exp(-x)}$ is the sigmoid function. 
When \eqref{ce} and \eqref{ova} is used in \eqref{systemloss}, 
they recover the multi-expert CE/OvA-log losses \citep{DBLP:conf/aistats/VermaBN23}, respectively.

\subsection{Consistency Analysis of the Classifier}
\label{s51}
	We analyze the properties of the first $K$ dimensions of the optimal scoring function, i.e., the optimal classifier,
	which is characterized by the following lemma:	
	\begin{lemma}[Consistency of Optimal Classifiers]\label{lemma_condition}When using \eqref{ce} and \eqref{ova} in \eqref{Loss_PiCCE},
		their corresponding risk \eqref{risk_PiCCE} is minimizable for any $\bm{x}\in\mathcal{X}$.
		Furthermore, denote by $\bm{\theta}^*$ any minimizer of \eqref{risk_PiCCE} and any $\bm{x}\in\mathcal{X}$, %
        when \eqref{ce} and \eqref{ova} is used as multiclass loss $\phi$ in PiCCE \eqref{Loss_PiCCE}, $\mathrm{argmax}_{y\in[K]}\theta^*_{y}\in\mathrm{Argmax}_{y\in[K]}\eta_{y}(\bm{x})$, and:
		\begin{itemize}[parsep=1pt]
			\item[(A).] When \eqref{ce} is used in \eqref{Loss_PiCCE}, denote by $\eta^*_{y}=\mathrm{softmax}(\bm{\theta}^*)_{y}$ and $u^*_{j}=\mathrm{softmax}(\bm{\theta}^*)_{K+j}$: $${\eta}^*_{y}=\eta_{y}(\bm{x})(1-\widetilde{V}(\bm{x})),~~\forall y\in[K].$$
			where $\widetilde{V}(\bm{x})\coloneqq\sum_{j=1}^{J}u^*_j=\frac{V(\bm{x})}{1+V(\bm{x})}$ 
			and $V(\bm{x})=\mathrm{Pr}(\mathop{\cup}\limits_{j\in[J]} M_{j}=Y|X=\bm{x})$.
			
			\item[(B).] When \eqref{ova} is used in \eqref{Loss_PiCCE}, $s(\theta_{y}^{*})=\eta_{y}(\bm{x})$, $\forall y\in[K]$.
		\end{itemize}
	\end{lemma}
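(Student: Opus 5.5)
The plan is to exploit the fact that in \eqref{risk_PiCCE} the permutation $\bm{\sigma}$ depends only on the \emph{expert} coordinates $\theta_{K+1},\dots,\theta_{K+J}$. Consequently, when those coordinates are held fixed, the risk is a smooth function of the label coordinates $\theta_1,\dots,\theta_K$ with a fixed weight vector. This holds even at ties: by the symmetry assumption and the continuity argument of Theorem~\ref{Thm_Continuity}, every admissible $\bm{\sigma}$ gives the same value there. The second ingredient is Lemma~\ref{Lemma_PiCCE_Underfitting}. For every $\bm{\sigma}$ it gives $\sum_j A^j_{\bm{\sigma}}(\bm{x})=V(\bm{x})$, so the total weight $\sum_y\eta_y+\sum_j A^j_{\bm{\sigma}}=1+V(\bm{x})$ does not depend on $\bm{\theta}$.

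\textbf{Minimizability.} For each of the finitely many permutations $\bm{\sigma}$, consider the closed cone $C_{\bm{\sigma}}=\{\bm{\theta}:\theta_{\sigma_1+K}\ge\cdots\ge\theta_{\sigma_J+K}\}$. On $C_{\bm{\sigma}}$ the risk coincides with the weighted loss $\sum_y\eta_y\phi(\bm{\theta},y)+\sum_j A^j_{\bm{\sigma}}\phi(\bm{\theta},\sigma_j+K)$, which is convex and continuous for both \eqref{ce} and \eqref{ova}.
\begin{itemize}
\item For \eqref{ce}, I would pass to the softmax simplex, which removes the shift invariance. There the weighted cross-entropy is a continuous function on a compact set, possibly in the extended sense when some weights vanish. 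The ordering constraint is preserved because softmax is monotone, so the infimum over $C_{\bm{\sigma}}$ is attained.
\item For \eqref{ova}, the loss separates coordinatewise into one-dimensional logistic-type terms, each of which is minimized on a closed interval.
\end{itemize}
Taking the best of the finitely many $\bm{\sigma}$ gives a global minimizer. I expect this step to be the main obstacle. The difficulties are degenerate weights ($\eta_y(\bm{x})=0$ or $A^j_{\bm{\sigma}}=0$), which push scores to $-\infty$, and the interaction of the ordering constraint with the boundary. I would first try to handle these by working in probability space (softmax or sigmoid outputs) with the usual convention of extended scores.

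\textbf{Part (A).} Let $\bm{\theta}^*$ be a minimizer and fix its expert coordinates. Then $\bm{\sigma}$ is fixed, and $\bm{\theta}^*_{1:K}$ must be an unconstrained stationary point in those coordinates. For cross-entropy, $\partial/\partial\theta_y$ of the weighted loss equals $-\eta_y(\bm{x})+(\text{total weight})\cdot\mathrm{softmax}(\bm{\theta})_y$. By Lemma~\ref{Lemma_PiCCE_Underfitting} this is $-\eta_y(\bm{x})+(1+V(\bm{x}))\,\eta^*_y$. Setting it to zero gives $\eta^*_y=\eta_y(\bm{x})/(1+V(\bm{x}))=\eta_y(\bm{x})(1-\widetilde V(\bm{x}))$. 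Summing over $y\in[K]$ then gives $\sum_j u^*_j=1-1/(1+V)=\widetilde V(\bm{x})$. Since $\eta^*_y$ is a positive multiple of $\eta_y(\bm{x})$, and softmax is order preserving, $\mathrm{argmax}_{y\in[K]}\theta^*_y$ lies in $\mathrm{Argmax}_{y\in[K]}\eta_y(\bm{x})$.

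\textbf{Part (B).} For \eqref{ova}, $\theta_y$ with $y\le K$ appears only through $-\log s(\theta_y)$ with weight $\eta_y$, and through $-\log(1-s(\theta_y))$ in $\phi(\bm{\theta},y')$ for $y'\in[K]\setminus\{y\}$, with total weight $1-\eta_y$. The expert terms $\phi(\bm{\theta},j+K)$ involve only $\theta_{j+K}$. The problem in $\theta_y$ is therefore a one-dimensional logistic loss minimized at $s(\theta^*_y)=\eta_y(\bm{x})$. The argmax claim follows from monotonicity of $s$.
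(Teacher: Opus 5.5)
Your proposal is correct. For (B) and for minimizability it is essentially the paper's argument. The paper also passes to sigmoid/softmax space and invokes continuity on a closed feasible set; your cone-by-cone decomposition over the finitely many $\bm\sigma$ is a more careful version of that one-line step. One small imprecision: on $C_{\bm\sigma}$ the OvA expert coordinates are still coupled by the ordering constraint. Attainment there therefore comes from compactness of $\{\bm u\in[0,1]^J:u_{\sigma_1}\ge\cdots\ge u_{\sigma_J}\}$, not from coordinatewise minimization. Also, invariance across admissible $\bm\sigma$ at ties is the content of Lemma~\ref{Thm_PiCCE_risk}, not Theorem~\ref{Thm_Continuity}.

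For (A) your route genuinely differs from the paper's. The paper works in probability space with variables $(\widehat{\bm\eta},\bm u)$ and proceeds in two stages. It first fixes $\bm u$ and shows $\widehat\eta_y=\eta_y(\bm x)(1-\sum_j u_j)$. It then rescales $\bm u$ by a common factor, which leaves $\bm\sigma$ unchanged, and uses Lemma~\ref{Lemma_PiCCE_Underfitting} together with properness of the log loss to force $\sum_j u_j=V(\bm x)/(1+V(\bm x))$.

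You instead perturb only the label scores $\theta_{1:K}$ with the expert scores frozen. That slice keeps $\bm\sigma$ fixed while moving $\widehat{\bm\eta}$ freely and rescaling $\bm u$ uniformly, so it covers exactly the family the paper's two steps optimize over. The single condition $-\eta_y(\bm x)+(1+V(\bm x))\,\mathrm{softmax}(\bm\theta^*)_y=0$ then gives proportionality and the total expert mass at once. Your version is shorter and shows exactly where Lemma~\ref{Lemma_PiCCE_Underfitting} enters: as the total weight of the weighted cross-entropy.

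What the paper's probability-space formulation buys is that it needs no differentiability at $\bm\theta^*$. If $\eta_y(\bm x)=0$ for some $y$, then $\theta^*_y=-\infty$ and your stationarity argument is unavailable in that coordinate. You would need to add the easy observation that such a coordinate carries zero softmax mass at the optimum, and then apply stationarity to the remaining label coordinates, where the total weight is still $1+V(\bm x)$. Expert scores at $-\infty$ (e.g.\ when some $A^j_{\bm\sigma}(\bm x)=0$) cause no trouble, since they simply drop out of the softmax denominator.
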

	This lemma immediately establishes the consistency of PiCCE with respect to \eqref{ce} and \eqref{ova}, 
	implying that the optimal L2D system serves as an effective classifier. 
	Furthermore, the method yields Fisher-consistent class probability estimators. 
	Specifically, Lemma \ref{lemma_condition} (A) shows that when \eqref{ce} is used for PiCCE, 
	the term $\frac{\mathrm{softmax}(\bm{\theta})_{y}}{1-\sum_{i=K+1}^{K+J}\mathrm{softmax}(\bm{\theta})_{i}}$ converges to the class probability $\eta_{y}(\bm{x})$ as $\bm{\theta}$ approaches $\bm{\theta}^*$. 
	Analogously, Lemma \ref{lemma_condition} (B) demonstrates that $s(\theta_{y})$ converges to $\eta_{y}(\bm{x})$ when the OvA-log loss \eqref{ova} is employed.

\begin{table*}[t]
    \centering
    \caption{The mean of the system error (Err, rescaled to 0-100) and coverage (Cov) for 3 trails on the CIFAR-100 and ImageNet datasets. %
    }
    \resizebox{\linewidth}{!}{
    \fontsize{10.5pt}{12pt}\selectfont
    \begin{tabular}{c|c|cccc|cccc||cccc|cccc}
    \toprule
         \multicolumn{2}{c}{Dataset}  & \multicolumn{8}{c}{CIFAR-100} & \multicolumn{8}{c}{ImageNet} \\
     \midrule
     \multicolumn{2}{c}{Expert Pattern}  & \multicolumn{4}{c}{Animal Expert}  &  \multicolumn{4}{c}{Overlapped Animal Expert} &  \multicolumn{4}{c}{Dog Expert}
     &  \multicolumn{4}{c}{Overlapped Dog Expert}\\
    \midrule
    \multicolumn{2}{c}{Loss Formulation} & \multicolumn{2}{c}{Vanilla}& \multicolumn{2}{c}{PiCCE} & \multicolumn{2}{c}{Vanilla}& \multicolumn{2}{c}{PiCCE}& \multicolumn{2}{c}{Vanilla}& \multicolumn{2}{c}{PiCCE}& \multicolumn{2}{c}{Vanilla}& \multicolumn{2}{c}{PiCCE}\\
    \midrule
    Method&$\#$Exp  & Err & Cov &  Err & Cov &  Err & Cov &  Err & Cov &  Err & Cov &  Err & Cov &Err & Cov &Err & Cov  \\
    \midrule
    \multirow{5}{*}{CE}&4&18.48& 74.58 & \textbf{18.32}& \textbf{77.50} & 16.10 & 64.50& \textbf{15.10} & \textbf{67.80} &42.74& 89.08 & \textbf{42.10}& \textbf{89.85} & 41.56&88.37&\textbf{41.23} &  \textbf{89.16} \\
    &8&18.91& 74.35& \textbf{18.21}& \textbf{76.35} & 16.24 & 64.59& \textbf{16.10} & \textbf{71.35}& 44.16& 88.71& \textbf{41.96} & \textbf{89.91}&42.65&87.99&\textbf{41.72}& \textbf{89.23}\\
    &12&19.08 & 75.18&\textbf{18.19} & \textbf{77.43}& 16.99 & 66.09& \textbf{15.84} & \textbf{69.86}& 46.36& 88.50& \textbf{41.50}& \textbf{89.78}&44.37&87.85& \textbf{41.77}& \textbf{89.14} \\
    &16&19.14& 76.39& \textbf{18.16} & \textbf{79.14}& 18.72 & 67.21&\textbf{15.61}& \textbf{73.62}& 47.01& 88.45& \textbf{41.40}& \textbf{89.99}&45.17&87.54& \textbf{41.79}& \textbf{88.94} \\
    &20&21.13& 73.57& \textbf{18.11} & \textbf{78.33}& 19.22 & 69.31& \textbf{15.58} & \textbf{73.95}& 48.00& 88.51& \textbf{41.32}& \textbf{90.01}&46.37&87.34&\textbf{42.07}& \textbf{88.80}\\
    \midrule
    \multirow{5}{*}{OvA}&4& 19.46& 83.72&\textbf{19.10}& \textbf{86.43}& 17.07 & 79.28& \textbf{16.58}& \textbf{83.85}& 45.77& 88.45& \textbf{42.41} & \textbf{89.17}& 44.56& 87.30& \textbf{42.03}& \textbf{88.30}\\
    &8& 20.09&83.30&\textbf{18.98}& \textbf{86.83}& 18.03 & 79.69& \textbf{17.71}& \textbf{83.45}& 52.62 & 86.66& \textbf{42.11} & \textbf{89.29}& 52.05& 85.56& \textbf{42.15}& \textbf{88.51}\\
    &12&20.70&82.67&\textbf{18.86} & \textbf{86.89}& 18.45 & 79.36& \textbf{17.77} & \textbf{84.93}& 57.66& 85.74& \textbf{42.45}& \textbf{89.20}& 56.64& 84.51& \textbf{42.20}& \textbf{88.74} \\
    &16&21.84&83.09&\textbf{18.70}  & \textbf{87.11}& 19.85 & 77.30& \textbf{17.76} & \textbf{85.02}& 62.16& 84.67& \textbf{42.17}& \textbf{89.12}& 59.43& 83.68&\textbf{42.03}& \textbf{88.70}\\
    &20&22.91&77.71&\textbf{18.63} & \textbf{86.69}& 20.95 & 72.72& \textbf{17.74}& \textbf{86.19}&66.56& 83.90& \textbf{42.25}& \textbf{89.08}& 62.91& 82.93& \textbf{42.10}& \textbf{88.77}\\
    \bottomrule
    \end{tabular}
    }
    \vspace{-2pt}
    \label{Table: SystemAcc_CIFARImageNet}
\end{table*}
\begin{table*}[]
    \caption{The mean of the system error (Err, rescaled to 0-100) and coverage (Cov) for 3 trails on the MiceBone and Chaoyang dataset. %
    }
    \centering
    \resizebox{\linewidth}{!}{
    \begin{tabular}{c|cccc|cccc||cccc|cccc}
    \toprule
    \multicolumn{1}{c}{Dataset} & \multicolumn{8}{c}{MiceBone}& \multicolumn{8}{c}{Chaoyang}\\
    \midrule
    \multicolumn{1}{c}{Method} & \multicolumn{2}{c}{CE}& \multicolumn{2}{c}{PiCCE-CE}& \multicolumn{2}{c}{OvA}& \multicolumn{2}{c}{PiCCE-OvA}& \multicolumn{2}{c}{CE}& \multicolumn{2}{c}{PiCCE-CE}& \multicolumn{2}{c}{OvA}& \multicolumn{2}{c}{PiCCE-OvA}\\
    \midrule
     $\#$Exp  & Err & Cov& Err & Cov& Err & Cov& Err & Cov & Err & Cov& Err & Cov& Err & Cov& Err & Cov  \\
    \midrule
         2& 15.17& 60.92& \textbf{15.23}& \textbf{69.28}& 14.91&72.26&\textbf{14.06}&\textbf{83.34}& 1.32 & 53.86 &\textbf{1.22}& \textbf{63.05}&1.51&32.33&\textbf{1.46}& \textbf{37.76} \\
         4& 15.88&55.09& \textbf{15.17}& \textbf{68.11}&15.02&68.24 &\textbf{13.80}& \textbf{70.33}& 1.75&51.68 &\textbf{1.60}&\textbf{55.86}&1.90& 20.37&\textbf{1.56}& \textbf{27.76}\\
         6& 15.99&51.11&\textbf{14.97}& \textbf{62.22}&15.89& 63.84&\textbf{13.09}& \textbf{66.10}& 2.48& 45.99&\textbf{1.75}& \textbf{52.02}&2.04& 14.39&\textbf{1.31} & \textbf{23.58}\\
         8&16.72& 43.62& \textbf{13.35}&\textbf{60.72}&16.72&57.62&\textbf{13.03}&\textbf{59.96}& 3.35& 41.23&\textbf{2.04}& \textbf{50.90}&2.48& 13.17&\textbf{1.02}& \textbf{20.38}\\
    \bottomrule
    \end{tabular}
    }
    \label{Table: SystemAcc_Realworld}
    \vspace{-8pt}
\end{table*}

\begin{figure*}[t]
\captionsetup{skip=4pt} 
    \centering
    \begin{subfigure}[t]{0.32\textwidth}
    \setlength{\abovecaptionskip}{2pt}
        \centering
        \includegraphics[width=\linewidth]{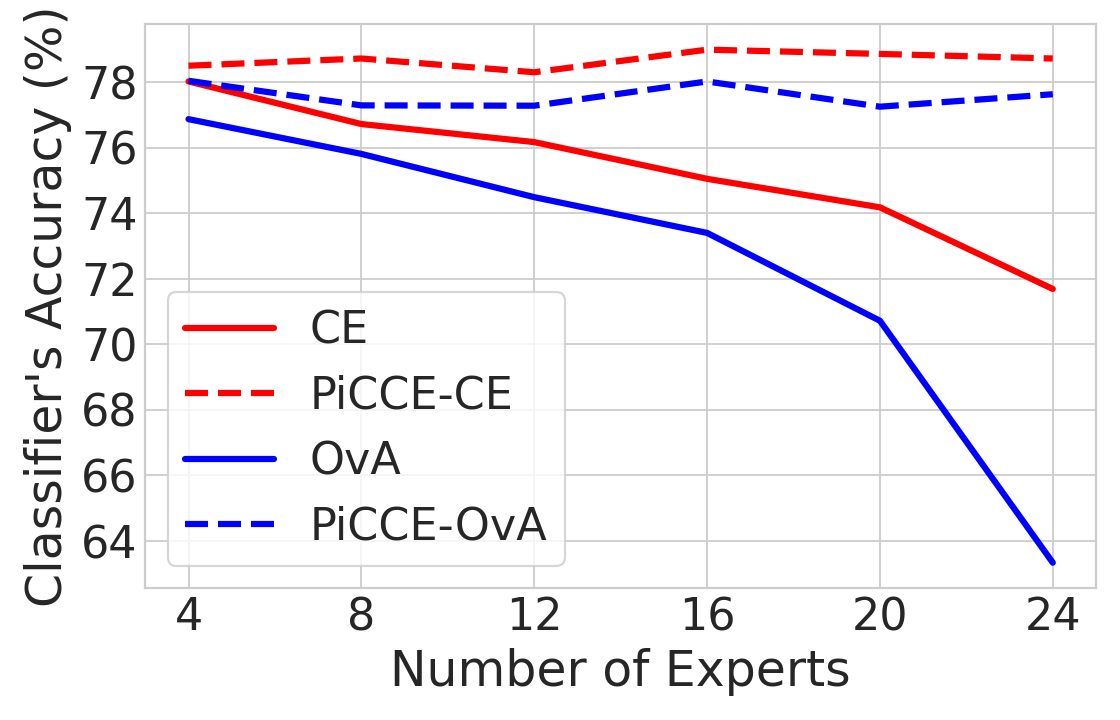}
        \caption{CIFAR-100: Animal Expert}
        \label{}
    \end{subfigure}
    \begin{subfigure}[t]{0.32\textwidth}
    \setlength{\abovecaptionskip}{2pt}
        \centering
        \includegraphics[width=\linewidth]{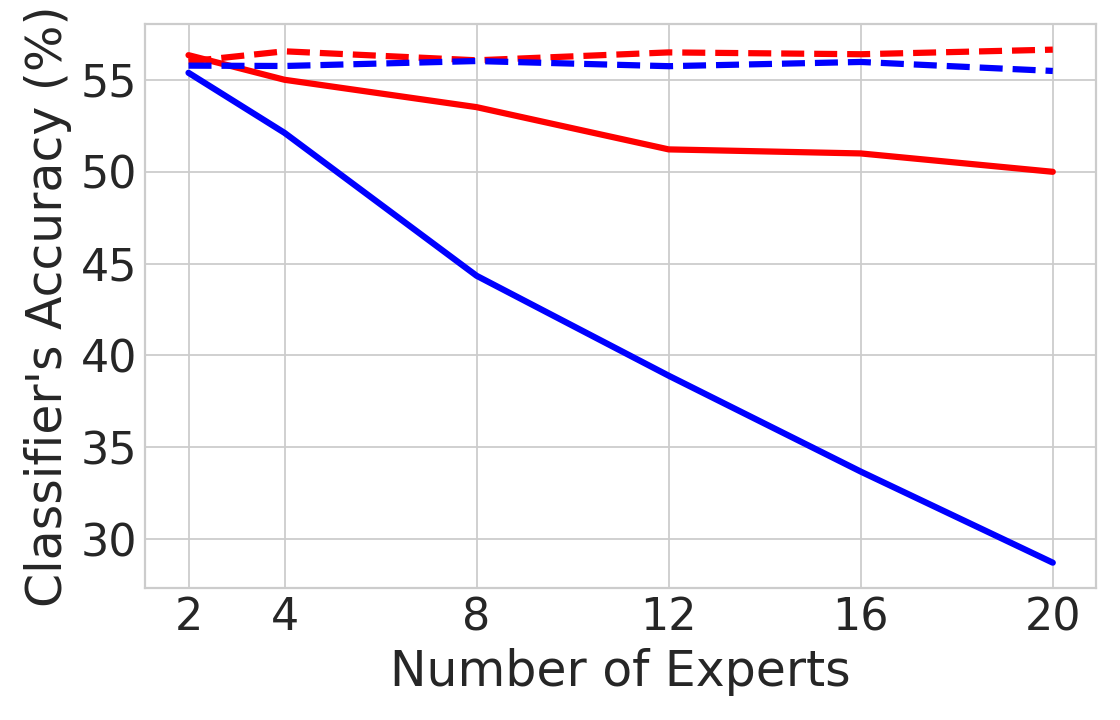}
        \caption{ImageNet: Dog Expert}
        \label{}
    \end{subfigure}
    \begin{subfigure}[t]{0.32\textwidth}
    \setlength{\abovecaptionskip}{2pt}
        \centering
        \includegraphics[width=\linewidth]{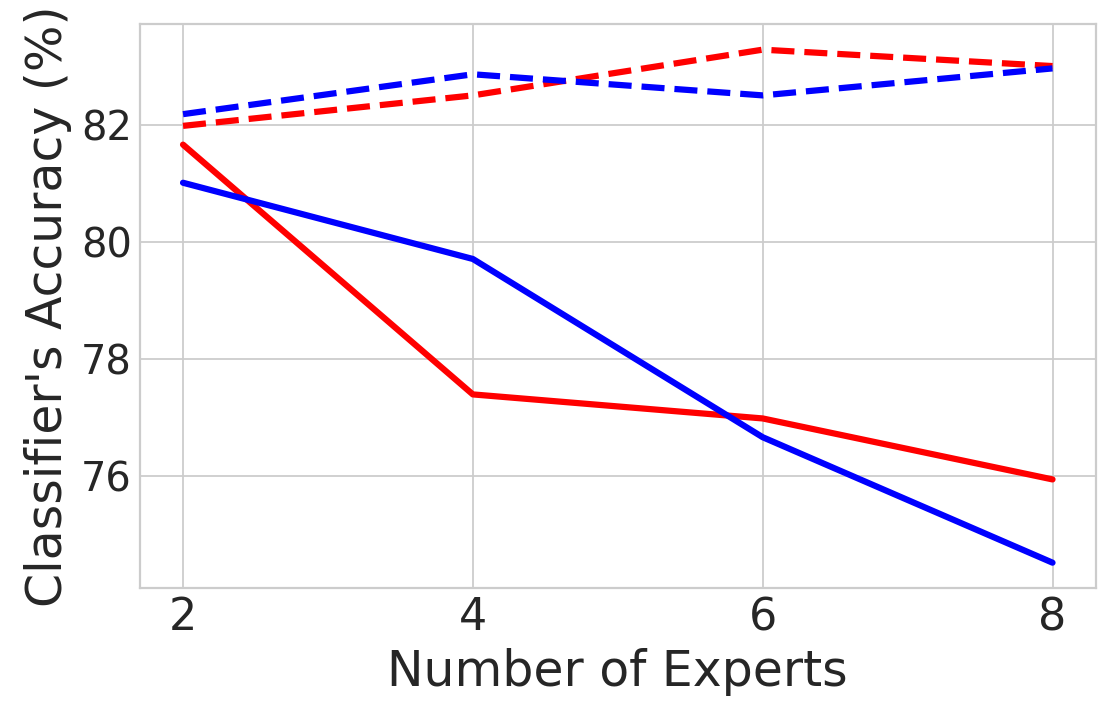}
        \caption{MiceBone}
        \label{}
    \end{subfigure}
    \begin{subfigure}[t]{0.32\textwidth}
    \setlength{\abovecaptionskip}{2pt}
        \centering
        \includegraphics[width=\linewidth]{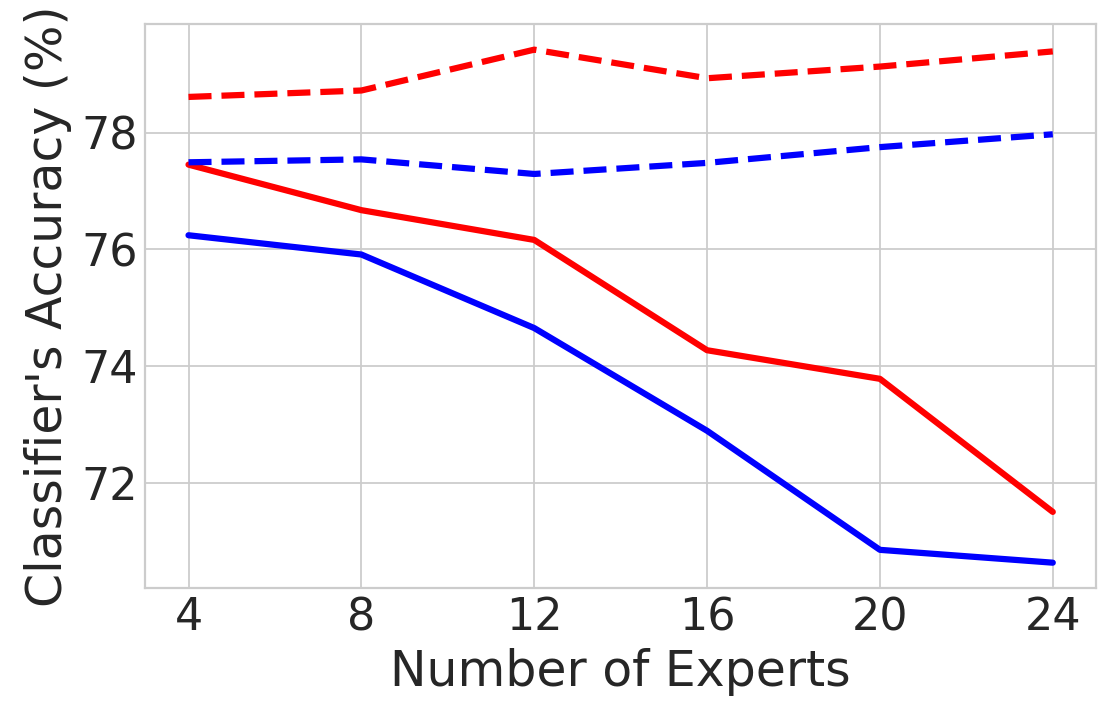}
        \caption{CIFAR-100: Overlapped Animal Expert}
        \label{}
    \end{subfigure}
    \begin{subfigure}[t]{0.32\textwidth}
    \setlength{\abovecaptionskip}{2pt}
        \centering
        \includegraphics[width=\linewidth]{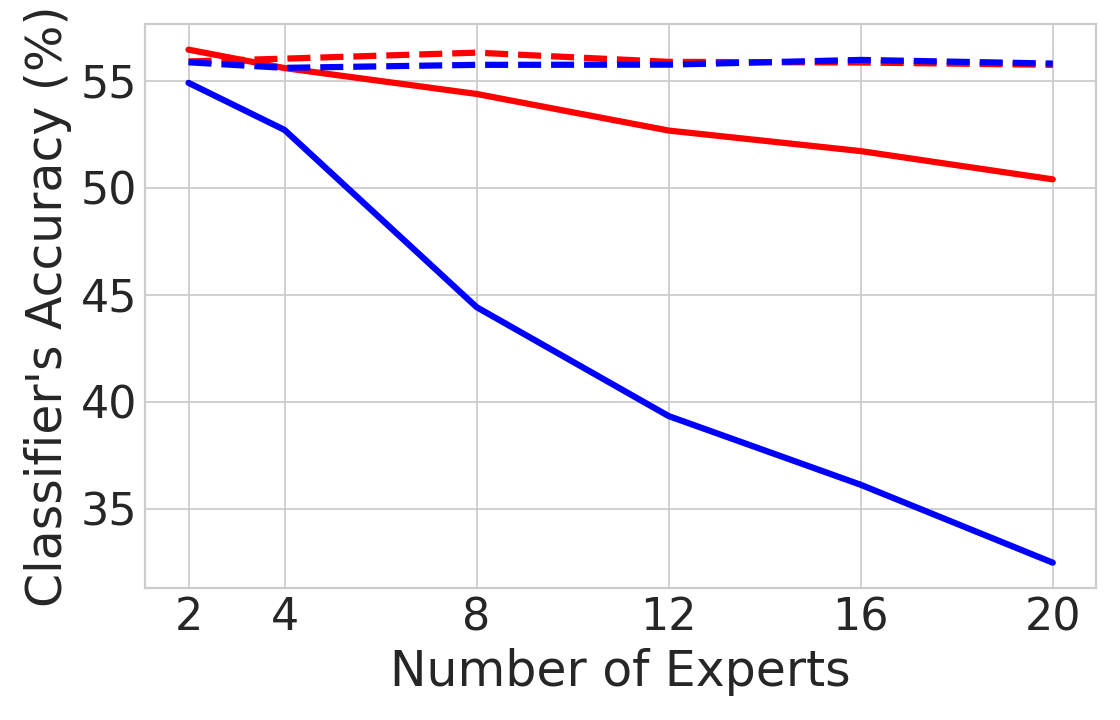}
        \caption{ImageNet: Overlapped Dog Expert}
        \label{}
    \end{subfigure}
    \begin{subfigure}[t]{0.32\textwidth}
    \setlength{\abovecaptionskip}{2pt}
        \centering
        \includegraphics[width=\linewidth]{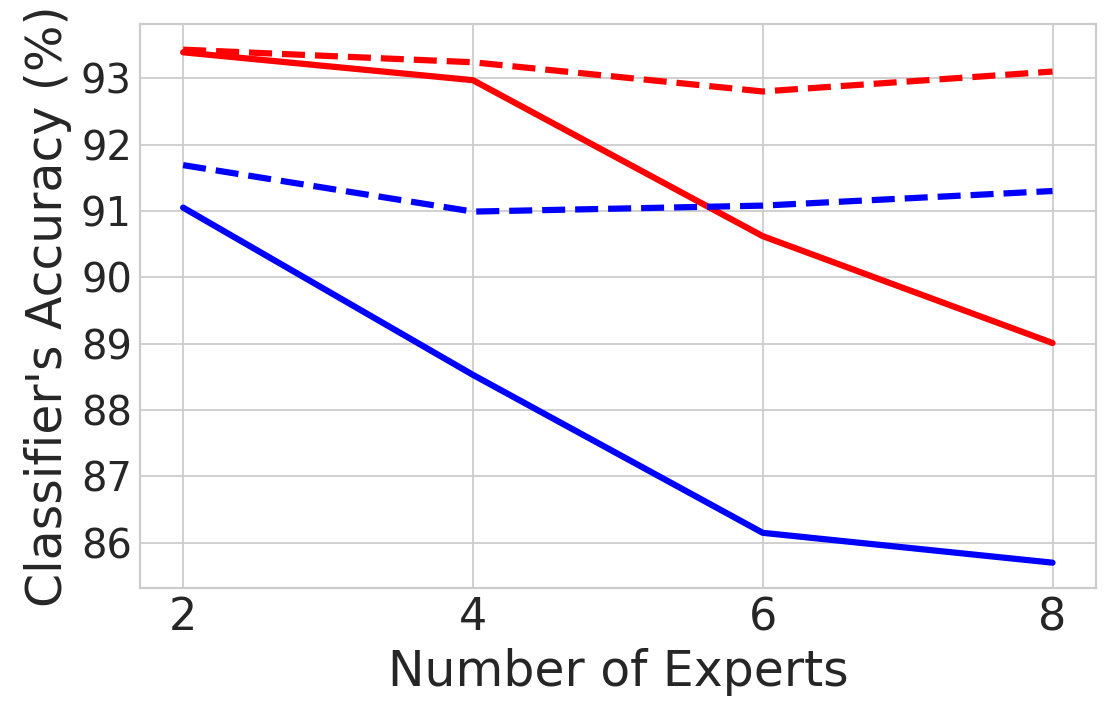}
        \caption{Chaoyang}
        \label{}
    \end{subfigure}
    \caption{Classifier accuracy vs. number of experts on synthetic (CIFAR-100, ImageNet) and real-world expert datasets (MiceBone, Chaoyang). Solid lines denote methods derived from \eqref{systemloss}, while dashed lines correspond to those derived from \eqref{Loss_PiCCE}. Across all datasets, existing methods exhibit a performance drop as the number of experts increases, whereas PiCCE remains stable.}
    \label{Fig: ClassifierError}
    \vspace{-8pt}
\end{figure*}

\subsection{Consistency Analysis of the L2D System}
\label{s52}
	Based on the consistency of the classifier in Section~\ref{s51},
	we now examine the consistency of the whole L2D system,
	i.e., whether the system can recognize the best expert and choose the better one between it and the optimal classifier.	
	
	For any $\mathcal{M}'\subseteq \mathcal{M}$, $C_{\mathcal{M}'}\coloneqq\{\exists j\in\mathcal{M}':M_{j}=Y\}$ is the event that there exists correct experts in $\mathcal{M}'$.
	We show that consistency holds based on the following mild condition:
	\begin{condition}[Information Advantage of Optimal Expert]\label{c1} For any $\bm{x}\in\mathcal{X}$,
		assume the optimal expert is unique, 
		i.e., $|\mathrm{Argmax}_{j\in[J]}\mathrm{Acc}_{j}(\bm{x})|=1$ and we denote it as $j^*$.
		For any expert $j\not=j^*$ and any expert set $\mathcal{M}'\subseteq[J]/\{j,j^*\}$:
		\begin{align}
			\mathrm{Pr}\left(C_{\mathcal{M}'\cup\{j^*\}}|X=\bm{x}\right)>\mathrm{Pr}\left(C_{\mathcal{M}'\cup\{j\}}|X=\bm{x}\right)
		\end{align}
	\end{condition}
	In essence, this condition implies that the optimal expert yields the greatest improvement to the system's potential accuracy. 
	This is a natural expectation in practical scenarios,
	as the optimal expert typically possesses predictive advantages 
	that are not fully subsumed by the collective knowledge of the other experts. 
	A fundamental example is the deterministic expert setting \cite{DBLP:conf/nips/MaoMM023,DBLP:conf/isaim/MaoMZ24}, 
	where each expert outputs a fixed, distinct label. 
	In this case, incorporating the optimal expert (who predicts the most likely label) 
	invariably expands the effective coverage of the expert pool. 
	We further demonstrate the validity of this condition in broader stochastic settings in Appendix \ref{ap1}.
	Serving as a sufficient condition for the consistency of PiCCE, 
	this assumption leads to the following theorem:
	\begin{theorem}[Consistency and Expert Accuracy Estimator]\label{consistent}When Condition \ref{c1} holds and \eqref{ce} and \eqref{ova} is used as multiclass loss $\phi$ in \eqref{Loss_PiCCE},
    for any $\bm{x}\in\mathcal{X}$ and $\bm{\theta}^{*}$ minimizes \eqref{risk_PiCCE}.
		the prediction link $\varphi$ defined in \eqref{bayes_link} and $\bm{\theta}^{*}$ reproduces the Bayes optimal decision $f^*(\bm{x})$, i.e.:
		$\varphi(\bm{\theta}^*)=f^*(\bm{x}),$
		and $\mathrm{Argmax}_{j\in[J]}\theta^*_{j+K}=\mathrm{Argmax}_{j\in[J]}\ac_{j}(\bm{x})=\{j^*\}$. 
		Furthermore:
		\begin{itemize}[topsep=-2pt,parsep=1pt]
			\item[(A).] When \eqref{ce} is used, $u^*_{j^*}=\ac_{j^*}(\bm{x})\widetilde{V}(\bm{x})$.
			\item[(B).] When \eqref{ova} is used, $s(\theta_{j^*}^{*})=\ac_{j^*}(\bm{x})$. 
		\end{itemize}
	\end{theorem}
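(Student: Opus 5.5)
The risk \eqref{risk_PiCCE} is a $(K+J)$-class risk with weights that depend on the ordering $\bm{\sigma}$ of the expert scores. The plan is to study its minimizer $\bm{\theta}^*$ through a fixed-point argument. Let $\bm{\sigma}^*$ be the ordering induced by $\bm{\theta}^*$. Then $\bm{\theta}^*$ must also minimize the \emph{fixed-weight} risk
\[
\sum_y\eta_y\phi(\bm{\theta},y)+\sum_j A^j_{\bm{\sigma}^*}\phi(\bm{\theta},\sigma^*_j+K)
\]
over the region where the ordering is $\bm{\sigma}^*$. By the symmetry of $\phi$ in its last $J$ inputs, the weight attached to the $r$-th largest expert score is $A^r_{\bm{\sigma}}$ regardless of which permutation is chosen among ties. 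This is the same mechanism that gives continuity in Theorem~\ref{Thm_Continuity}.

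For CE and OvA, the minimizer of a fixed-weight risk has an explicit form.
\begin{itemize}
\item For CE, $\mathrm{softmax}(\bm{\theta})$ is proportional to the weights. So $u^*_{\sigma_j}=A^j_{\bm{\sigma}}/(1+V)$, using Lemma~\ref{Lemma_PiCCE_Underfitting} for the normalizer.
\item For OvA, each coordinate decouples. Here $s(\theta^*_{\sigma_j+K})=A^j_{\bm{\sigma}}$, because the $y>K$ branch of \eqref{ova}, combined with the $-\log(1-s)$ terms coming from the label part, gives a weighted logistic loss.
\end{itemize}
In both cases the expert scores are monotone in the weights $A^j_{\bm{\sigma}^*}$. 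Self-consistency therefore requires $A^1_{\bm{\sigma}^*}\ge A^2_{\bm{\sigma}^*}\ge\cdots$ in whatever ordering $\bm{\sigma}^*$ realizes.

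The key step is to show that self-consistency forces $\sigma^*_1=j^*$, and that the top score is strictly largest. Note $A^1_{\bm{\sigma}}=\mathrm{Acc}_{\sigma_1}$, and
\[
A^j_{\bm{\sigma}}=\Pr(C_{\sigma_{1:j}})-\Pr(C_{\sigma_{1:j-1}}).
\]
Suppose $\sigma^*_1=j\ne j^*$, and let $j^*=\sigma^*_r$ with $r>1$. Then
\[
A^r_{\bm{\sigma}^*}=\Pr(C_{\mathcal{M}'\cup\{j,j^*\}})-\Pr(C_{\mathcal{M}'\cup\{j\}}),
\]
where $\mathcal{M}'=\sigma^*_{2:r-1}$. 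I want to compare this with $A^1_{\bm{\sigma}^*}=\Pr(C_{\{j\}})$, and more generally show that some later weight exceeds an earlier one, which would contradict monotonicity.

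I would do this by a swap argument. Exchanging $j$ and $j^*$ in the ordering leaves $\Pr(C_{\sigma_{1:r}})$ unchanged. It also strictly increases each partial union $\Pr(C_{\sigma_{1:k}})$ for $1\le k<r$, by Condition~\ref{c1} applied with $\mathcal{M}'=\sigma_{2:k}$. Summing the telescoping weights then shows that the ordering $\bm{\sigma}^*$ cannot be consistent with decreasing weights. The cleanest version is probably this: the first-position weight $\mathrm{Acc}_j$ must be at least every later weight, yet the later weights reallocate mass so that monotonicity fails. If that direct contradiction proves awkward, I would fall back on comparing risk values. Plugging in the explicit minimizer, the minimal CE risk for ordering $\bm{\sigma}$ is an entropy-type function of $(\eta,A_{\bm{\sigma}})$, and the swap majorizes the weight vector, so it strictly lowers the risk. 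This contradicts the optimality of $\bm{\theta}^*$. I expect this step, the ordering/majorization argument under Condition~\ref{c1}, to be the main obstacle, and it also must handle ties among the expert scores.

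Once $\sigma^*_1=j^*$ is established, everything else follows. We have $A^1=\mathrm{Acc}_{j^*}$, and strict dominance follows from uniqueness of $j^*$ together with Condition~\ref{c1} (which gives $A^1>A^k$). This yields $\mathrm{Argmax}_j\theta^*_{j+K}=\{j^*\}$. The estimators are then immediate:
\begin{itemize}
\item For CE, $u^*_{j^*}=\mathrm{Acc}_{j^*}/(1+V)=\mathrm{Acc}_{j^*}\widetilde V$ in the paper's normalization, after checking the normalization against Lemma~\ref{lemma_condition}(A).
\item For OvA, $s(\theta^*_{j^*})=\mathrm{Acc}_{j^*}$.
\end{itemize}

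For Bayes optimality of $\varphi(\bm{\theta}^*)$, we compare $\theta^*_{j^*+K}$ with $\max_{y\le K}\theta^*_y$. Under CE, both are proportional, with the same factor, to $\mathrm{Acc}_{j^*}$ and $\eta_{y^*}$ respectively, by Lemma~\ref{lemma_condition}(A). Under OvA, both pass through the same monotone sigmoid, giving $\mathrm{Acc}_{j^*}$ and $\eta_{y^*}$ by Lemma~\ref{lemma_condition}(B). Hence the argmax of $\bm{\theta}^*$ matches \eqref{01lossBayessolution}, up to the tie-breaking convention.
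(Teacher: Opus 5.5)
There is a genuine gap in the step you flagged as the main obstacle, forcing $\sigma^*_1=j^*$.

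\textbf{The self-consistency route cannot work.} Your primary argument tries to get a contradiction from self-consistency, i.e.\ from the weights $A^k_{\bm\sigma^*}$ being nonincreasing along the realized ordering. That condition only says $\bm\theta^*$ is stationary for its own fixed-ordering risk, and wrong orderings can satisfy it. Take $J=2$, $\ac_1(\bm x)=0.9$, $\ac_2(\bm x)=0.8$, with independent correctness events; Condition~\ref{c1} holds since $0.9>0.8$. For the ordering $(2,1)$ the weights are $(0.8,\,0.2\cdot 0.9)=(0.8,0.18)$, which is nonincreasing. So the CE point $u_2=0.8/(1+V)$, $u_1=0.18/(1+V)$, together with $\widehat\eta_y=\eta_y/(1+V)$, is self-consistent. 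It is in fact a local minimizer of the PiCCE risk: near it the ordering is frozen, so the risk coincides with a strictly convex fixed-ordering risk. The same numbers give an OvA counterexample. Hence the claim that ``the later weights reallocate mass so that monotonicity fails'' is false, and this step must be a global comparison of risk values.

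\textbf{The fallback needs one more ingredient.} Your fallback is the right kind of argument, but as sketched it has two holes.
\begin{itemize}
\item[(i)] The swap raises the prefix sums $\Pr(C_{\sigma_{1:k}})$. That implies $A_{\bm\sigma'}\succ A_{\bm\sigma^*}$ only once you know $A_{\bm\sigma^*}$ is already nonincreasing, i.e.\ that $\bm\theta^*$ is the unconstrained closed-form minimizer for $\bm\sigma^*$. If $\bm\theta^*$ has tied expert scores, it lies on the boundary of the ordered region and is only a constrained minimizer there, so this is not justified.
\item[(ii)] To contradict optimality you need a point whose \emph{true} PiCCE risk is at most the entropy value $G(\bm\sigma')$. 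The closed-form minimizer for $\bm\sigma'$ need not be ordered by $\bm\sigma'$, so its PiCCE risk is not $G(\bm\sigma')$ a priori.
\end{itemize}
Both holes are closed by the identity that the PiCCE expert term equals $\min_{\bm\pi}\sum_k A^k_{\bm\pi}\phi(\bm\theta,\pi_k+K)$ over all orderings. This is Edmonds' greedy property, since $S\mapsto\Pr(C_S\mid X=\bm x)$ is submodular. It gives $\min R=\min_{\bm\pi}G(\bm\pi)$ and forces every minimizer to be the closed-form point of its ordering. Strict Schur-concavity of $-\sum w\log w$ (resp.\ of the summed binary entropy for OvA) then finishes your swap.

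\textbf{How the paper avoids this.} The paper swaps the \emph{values} $u^*_{\sigma_1}$ and $u^*_{j^*}$ inside $\bm u^*$ itself. The new vector is automatically ordered by $\bm\sigma'$ with $u'_{\sigma'_k}=u^*_{\sigma_k}$. Summation by parts then gives
\[
P(\bm u^*)-P(\bm u')=\sum_{k<r}(S^*_k-S'_k)(\log u^*_{\sigma_{k+1}}-\log u^*_{\sigma_k})>0,
\]
where $S^*_k=\Pr(C_{\sigma_{1:k}}\mid X=\bm x)$ and $S'_k=\Pr(C_{\sigma'_{1:k}}\mid X=\bm x)$. The positivity comes directly from Condition~\ref{c1}, with no closed form and no tie-free assumption. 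Uniqueness of the argmax and the value of $u^*_{j^*}$ are then obtained by a $(1\pm\delta)$ perturbation and a KKT analysis on the ordered cone, rather than from the unconstrained formula.

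\textbf{Normalization in (A).} $\ac_{j^*}/(1+V)=\ac_{j^*}(1-\widetilde V(\bm x))$, not $\ac_{j^*}\widetilde V(\bm x)$. The paper's own proof also arrives at the factor $1-\widetilde V$, so item (A) as stated contains a typo and you should not force agreement with it. Your Bayes-consistency comparison is fine precisely because $\eta^*_y$ carries the same factor $1-\widetilde V(\bm x)$.
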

	This theorem formally validates the consistency of the L2D system driven by PiCCE.
	 Beyond system-level consistency, 
	 it facilitates the consistent estimation of the optimal expert's accuracy, 
	 which is a vital component for uncertainty quantification \cite{DBLP:conf/icml/VermaN22}. 
	 Notably, the OvA log loss formulation allows PiCCE to natively extract expert accuracy via $\mathrm{argmax}_{j\in[J]}s(\theta_{K+j})$ without extra post-processing. 
	 This aligns with the insights of \cite{DBLP:conf/icml/VermaN22,DBLP:conf/nips/CaoM0W023}, 
	 which highlighted the efficacy of OvA-based methods in modeling expert performance. 
	 Finally, we remark that Condition \ref{c1} is merely a sufficient condition, not a necessary one. 
	 This suggests that the theoretical guarantees of PiCCE likely extend to scenarios exceeding these assumptions, pointing towards a promising direction for future exploration.

\section{Experiments}
\label{Sec:experiment}
In this section, we evaluate PiCCE on both synthetic and real-world expert settings to validate our theoretical findings and empirical effectiveness in multi-expert L2D.
Additional experimental details and results are provided in Appendix~\ref{Append:Experiment}.

\subsection{Experimental Setup}

\paragraph{Models and Datasets}
For synthetic experts, we conduct experiments on CIFAR-100 \cite{krizhevsky2009learning} 
and ImageNet \cite{deng2009imagenet}.
For ImageNet, we consider a 120-class dog subset and construct synthetic domain-specialized dog experts under several controlled settings, including disjoint domains, overlapped domains, and varying accuracies. For CIFAR-100, a similar expert construction is adopted for biological-domain experts over 50 biological classes. Details are provided in Appendix~\ref{Append:Experiment}. 
Following previous works \cite{DBLP:conf/icml/VermaN22,DBLP:conf/nips/NarasimhanJMRK22}, we use a 28-layer WideResNet \cite{DBLP:conf/bmvc/ZagoruykoK16} and MobileNet-v2 \cite{mobilenetv2} as base models for CIFAR-100 and for ImageNet, respectively.
\paragraph{Baselines}
Our evaluation focuses on jointly training methods, as post-hoc approaches require additional retraining.
We evaluate combinations of the existing formulation \eqref{systemloss} with cross-entropy and one-vs-all base losses, which correspond to the multi-expert CE and OvA surrogates.
Our PiCCE method \eqref{Loss_PiCCE} instantiates the same base losses for a direct comparison, denoted as PiCCE-CE and PiCCE-OvA, respectively.

\subsection{Experimental Results}

\paragraph{System's Accuracy and Coverage}

In Table~\ref{Table: SystemAcc_CIFARImageNet}, we report the system error w.r.t. the target system loss \eqref{01loss} and coverage, defined as the ratio of non-deferred samples, under two different expert patterns (disjoint/overlapped) using the multi-expert CE and OvA surrogate losses.
The best results are highlighted in boldface.
Across all settings, PiCCE consistently outperforms the vanilla CE and OvA formulations, achieving lower system error while maintaining higher coverage.
Moreover, this performance improvement becomes more pronounced as the number of experts increases, 
which aligns with our theoretical analysis showing that existing surrogate formulations~\eqref{systemloss} can suffer from more severe underfitting with a larger expert aggregation term.
Overall, these results demonstrate that PiCCE yields more robust system-level performance in multi-expert L2D settings.
Results for the varying-accuracy expert pattern, which exhibit consistent trends, are provided in Appendix~\ref{Append:Experiment}.

In Table~\ref{Table: SystemAcc_Realworld}, we further provide the target system loss and coverage results on datasets with real-world experts, i.e., MiceBone and Chaoyang.
Consistent with the synthetic results, PiCCE achieves improved system error and higher coverage across different numbers of experts, indicating that our proposed approach remains effective in practical settings with real-world human experts.

\paragraph{Classifier's Accuracy}
From Figure~\ref{Fig: ClassifierError}, we observe that under both synthetic expert settings (CIFAR-100 and ImageNet), as well as real-world experts (MiceBone and Chaoyang), the classifier prediction accuracy of CE and OvA (solid lines) consistently degrades as the number of experts increases.
This degradation becomes increasingly pronounced with more experts and is particularly severe for OvA-based formulations, whose classifier accuracy drops sharply across all expert settings.
In contrast, methods derived from our PiCCE formulation (dashed lines) remain insensitive to the size of expert-set in the L2D system, maintaining consistently higher classifier accuracy across both synthetic and real-world datasets.
These consistent trends across diverse settings indicate that the underfitting observed in existing multi-expert surrogate losses~\eqref{systemloss} is driven by the expert aggregation term, and that PiCCE effectively mitigates this underfitting issue by leveraging the empirical information to avoid such aggregation.

\section{Conclusion}

In this work, we have provided both empirical evidence and theoretical insights into the inherent classifier underfitting challenges of multi-expert L2D.
We show that while underfitting in single-expert L2D requires additional assumptions, it arises naturally and inherently in the multi-expert setting due to the expert aggregation term.
This fundamental difference renders existing L2D remedies ineffective. We address this issue by proposing PiCCE, a surrogate-based formulation that bypasses expert aggregation by regulating selection through ground-truth information.
Our theoretical analysis and extensive experiments across synthetic and real-world benchmarks show that PiCCE effectively resolves multi-expert underfitting and consistently outperforms state-of-the-art methods.


\newpage
\appendix
\onecolumn

\section{Condition \ref{c1} with Stochastic Experts}
\label{ap1}

The stochastic expert setting is more complex compared with the deterministic expert cases
and learning with rejection tasks \cite{chow,bartlettrej,cortes2,charoenphakdee,DBLP:conf/nips/CaoM0W023,mao2023ranking,narasimhan2024learning,narasimhan2024plugin}, 
where the former assumes invariant experts with fixed decision rules and the latter assumes constant rejection costs.
We show in this section that Condition \ref{c1} holds in broader stochastic settings, by giving the following examples:
\begin{example}[Dominant Expert]
Suppose $j^*$ is a \textit{dominant expert} that its conditional accuracy equals or is higher than other experts on all classes, i.e., $\pr(M_{j^*}=Y|Y=y,X=\bm{x})\geq\pr(M_{j}=Y|Y=y,X=\bm{x})$,
and the inequality holds strictly on at least one class $y$.
Furthermore, the expert predictions are conditional independent given label $Y$ 
(which is a commonly used condition in previous multi-expert L2D studies \cite{DBLP:conf/aistats/VermaBN23}).
Then we can learn for any subset $\mathcal{M}$ and $j\not\in\mathcal{M}$:
\begin{align*}
&\pr(C_{\mathcal{M}\cup\{j\}}|X=\bm{x})=\sum_{y=1}^{K}\eta_{y}(\bm{x})\pr(C_{\mathcal{M}\cup\{j\}}||Y=y,X=\bm{x})\\
&=\sum_{y=1}^{K}\eta_{y}(\bm{x})\left(\pr(C_{\mathcal{M}}|Y=y,X=\bm{x})+\pr(M_{j}=Y|Y=y,X=\bm{x})-\pr(C_{\mathcal{M}}|Y=y,X=\bm{x})*\pr(M_{j}=Y|Y=y,X=\bm{x})\right)\\
&=\sum_{y=1}^{K}\eta_{y}(\bm{x})\left(\pr(C_{\mathcal{M}}|Y=y,X=\bm{x})+\pr(M_{j}=Y|Y=y,X=\bm{x})\underbrace{(1-\pr(C_{\mathcal{M}}|Y=y,X=\bm{x}))}_{>0 \text{ if $\mathcal{M}$ does not include $j^*$.}}\right)
\end{align*}
Then for any $j'\not=j^*$:
\begin{align*}
&\pr(C_{\mathcal{M}\cup\{j^*\}}|X=\bm{x})-\pr(C_{\mathcal{M}\cup\{j\}}|X=\bm{x})\\
&=\sum_{y=1}^{K}\eta_{y}(\bm{x})\left((\pr(M_{j^*}=Y|Y=y,X=\bm{x})-\pr(M_{j}=Y|Y=y,X=\bm{x}))(1-\pr(C_{\mathcal{M}}|Y=y,X=\bm{x}))\right)\\
&>0,
\end{align*}
and thus Condition \ref{c1} holds.
\end{example}
This example shows that Condition \ref{c1} can hold when expert predictions have randomness, where a dominant expert outperforms others across all classes. 
Furthermore, Condition \ref{c1} can be extended to milder scenarios. 
The following example demonstrates that the condition remains valid even when the expert is not globally superior, in which we focus on 3-class classification case for better presentation:
\begin{example}[Expert Only Dominant at the Major Class]Suppose the class number is 3, and the the class conditional distribution and expert class-conditional accuracy are as below:
	\begin{center}
		\begin{tabular}{c|c|ccc}
			\toprule
			Class $y$ & $\eta_y(x)$ & $\pr(M_{1}=Y|Y=y,X=\bm{x})$ & $\pr(M_{2}=Y|Y=y,X=\bm{x})$ & $\pr(M_{3}=Y|Y=y,X=\bm{x})$  \\
			\midrule
			$y=$1 (Major) & \textbf{0.80} & \textbf{0.90} & 0.60 & 0.60 \\
			$y=$2 (Minor) & 0.10 & 0.40 & \textbf{0.90} & \textbf{0.90} \\
			$y=$3 (Minor) & 0.10 & 0.40 & \textbf{0.90} & \textbf{0.90}\\
            \bottomrule
		\end{tabular}
	\end{center}
Here we can see that $\ac_{1}(\bm{x})=0.80>\ac_{2}(\bm{x})=\ac_{3}(\bm{x})=0.66$, and thus $j^*=1$. 
However, we can learn the conditional accuracy of expert $M_{1}$ is lower than those of $M_{2}$ and $M_{3}$, thereby Example 1 cannot be applied here.
However, we can infer Condition \ref{c1} still holds here.
When $j=2$ (the conclusion below holds symmetrically for $j=3$ since $M_2$ and $M_3$ has the same distribution):
\begin{align*}
&\pr(C_{\mathcal{M}\cup\{j^*\}}|X=\bm{x})-\pr(C_{\mathcal{M}\cup\{j\}}|X=\bm{x})\\
&=\sum_{y=1}^{K}\eta_{y}(\bm{x})\left((\pr(M_{j^*}=Y|Y=y,X=\bm{x})-\pr(M_{j}=Y|Y=y,X=\bm{x}))(1-\pr(C_{\mathcal{M}}|Y=y,X=\bm{x}))\right)\\
&=\sum_{y=1}^{K}\eta_{y}(\bm{x})\left((\pr(M_1=Y|Y=y,X=\bm{x})-\pr(M_2=Y|Y=y,X=\bm{x}))(1-\pr(M_{3}=Y|Y=y,X=\bm{x}))\right)\\
&=0.8*(0.9-0.6)(1-0.6)+0.1*(0.4-0.9)(1-0.9)+0.1*(0.4-0.9)(1-0.9)\\
&=0.8*0.3*0.4-0.1*0.5*0.1*2=0.086>0,
\end{align*}
and thus Condition \ref{c1} holds.
\end{example}
This example reveals that Condition \ref{c1} covers a broad range of scenarios 
where expert stochasticity interacts with the data distribution. 
Intuitively, the optimal expert only needs to outperform others on the majority classes,
while being allowed to be less accurate on rare classes.

These examples underscore the generality of Condition \ref{c1}. 
A promising direction for future work is to further explore the necessary and sufficient conditions for Condition \ref{c1} to hold.
\section{Proofs of Conclusions in Section \ref{sec4}}
\label{Appendix: Proofs}
	
	\subsection{Proof of Theorem~\ref{Thm_Continuity}}
	\begin{proof}
		For a given $\bm{\theta}_0 \in\mathbb{R}^{K+J}$, $\forall \epsilon>0, \exists \sigma>0$, such that $\forall \bm{\theta}\in B(\bm{\theta}_0,\sigma)=\{\bm{v}\in\mathbb{R}^{K+J}: \|  \bm{\theta}_0-\bm{v}\|< \sigma \}$:
		\begin{align*}
			\left|  \ell_{\phi}(\bm{\theta},y,\bm{m}) - \ell_{\phi}(\bm{\theta}_0,y,\bm{m}) \right| 
			&= \left|\phi(\bm{\theta},y)+\phi(\bm{\theta},\am\limits_{j\in[\bm{m}]=y} 
			\widetilde{\bm{\theta}}_j+K)
			-\left(\phi(\bm{\theta}_0,y)+\phi(\bm{\theta}_0,\am\limits_{j\in[\bm{m}=y]}
			\widetilde{\bm{\theta}}_{0j} +K)\right) \right|\\
			&= \left|\phi(\bm{\theta},y)+\phi(\bm{\theta}, j^*+K)
			-\left(\phi(\bm{\theta}_0,y)+\phi(\bm{\theta}_0,j^{*}_0 +K)\right) \right|\\
			&= \left|\left(\phi(\bm{\theta},y) - \phi(\bm{\theta}_0,y)\right)
			+ \left(\phi(\bm{\theta}, j^*+K)-\phi(\bm{\theta}_0,j^{*}_0 +K)\right) \right|
		\end{align*}
		where $j^*$ and $j^{*}_0$ denote the best expert under $\bm{\theta}$ and $\bm{\theta}_0$, respectively. We then complete the proof by considering the following two cases.
		
		\textbf{Case 1:} $j^*_0$ is unique. Denote suboptimal solution by $\widetilde{J}^{*}_0 := \am\limits_{j\in[\bm{m}=y]/\{j^*_0\}} \widetilde{\bm{\theta}}_{0j}$. For any $\widetilde{j}^{*}_0 \in \widetilde{J}^{*}_0 $, we assume that $\Delta := \|\bm{\theta}_{0j^*} - \bm{\theta}_{0\widetilde{j}^*} \|$, then for $\delta= \Delta$, we can obtain that $j^*_0 = j^*$ and 
		\begin{align*}
			\left| \ell_{\phi}(\bm{\theta},y,\bm{m}) - \ell_{\phi}(\bm{\theta}_0,y,\bm{m})  \right| 
			& =  \left|\left(\phi(\bm{\theta},y) - \phi(\bm{\theta}_0,y)\right)
			+ \left(\phi(\bm{\theta}, j^*_0+K)-\phi(\bm{\theta}_0,j^{*}_0 +K)\right) \right|\\
			& \leq \left|\phi(\bm{\theta},y) - \phi(\bm{\theta}_0,y)\right|
			+ \left|\phi(\bm{\theta}, j^*_0+K)-\phi(\bm{\theta}_0,j^{*}_0 +K)\right|\leq 2\epsilon
		\end{align*}
		The last inequality holds since $\phi$ is continuous at $\bm{\theta}_0$.
		
		\textbf{Case 2:} $j^*_0$ is not unique and belongs to $J^{*}_0 := \am\limits_{j\in[\bm{m}=y]} \widetilde{\bm{\theta}}_{0j}$.  Denote suboptimal solution by $\widetilde{J}^{*}_0 := \am\limits_{j\in[\bm{m}=y]/J^{*}_0} \widetilde{\bm{\theta}}_{0j}$. For any $\widetilde{j}^{*}_0 \in \widetilde{J}^{*}_0 $, we assume that $\Delta := \|\bm{\theta}_{0j^*} - 
		\bm{\theta}_{0\widetilde{j}^*} \|$, then for $\delta= \Delta$, we can obtain $j^* \in J^{*}_0$ and 
		\begin{align*}
			\left| \ell_{\phi}(\bm{\theta},y,\bm{m}) - \ell_{\phi}(\bm{\theta}_0,y,\bm{m})  \right| 
			& =  \left|\left(\phi(\bm{\theta},y) - \phi(\bm{\theta}_0,y)\right)
			+ \left(\phi(\bm{\theta}, j^*+K)-\phi(\bm{\theta}_0,j^{*}_0 +K)\right) \right|\\
			& \leq \left|\phi(\bm{\theta},y) - \phi(\bm{\theta}_0,y)\right|
			+ \left|\phi(\bm{\theta}, j^*+K)-\phi(\bm{\theta}_0,j^{*}_0 +K)\right|\\
			&= \left|\phi(\bm{\theta},y) - \phi(\bm{\theta}_0,y)\right|
			+ \left|\phi(\bm{\theta}, j^*+K) -\phi(\bm{\theta}_0,j^{*} +K)
			+\phi(\bm{\theta}_0,j^{*} +K)-\phi(\bm{\theta}_0,j^{*}_0 +K)\right|\\
			&\leq \left|\phi(\bm{\theta},y) - \phi(\bm{\theta}_0,y)\right|
			+ \left|\phi(\bm{\theta}, j^*+K) -\phi(\bm{\theta}_0,j^{*} +K) \right|
			+\left| \phi(\bm{\theta}_0,j^{*} +K)-\phi(\bm{\theta}_0,j^{*}_0 +K)\right|
		\end{align*}
		Because $\phi$ is  symmetric w.r.t. its last J inputs, it follows that $\left|\phi(\bm{\theta}_0, j^*+K)-\phi(\bm{\theta}_0,j^{*}_0 +K)\right|=0$. Thus we have 
		\begin{align*}
			\left| \ell_{\phi}(\bm{\theta},y,\bm{m}) - \ell_{\phi}(\bm{\theta}_0,y,\bm{m})  \right| 
			&\leq \left|\phi(\bm{\theta},y) - \phi(\bm{\theta}_0,y)\right|
			+ \left|\phi(\bm{\theta}, j^*+K) -\phi(\bm{\theta}_0,j^{*} +K) \right|\leq 2\epsilon
		\end{align*}
		The last inequality holds since $\phi$ is continuous at $\bm{\theta}_0$.
		
		Combing the conclusions above, we conclude the proof.
	\end{proof}
	
	\subsection{Proof of Theorem~\ref{Thm_PiCCE_risk}}
	\begin{proof}
		Denote by $\Sigma(\bm{\theta})=\{\bm{\sigma}: \widetilde{\theta}_{\sigma_1}\geq \cdots \geq \widetilde{\theta}_{\sigma_J}\}$ the set of all descending permutations of $\bm{\theta}$, we then obtain that:
		\begin{align*}
			&R_{\ell_{\phi}\mid X=\bm{x}}(\bm{\theta})=\mathbb{E}_{Y,\bm{M}\mid X=x}[\phi(\bm{\theta},Y)+\phi(\bm{\theta},\am\limits_{j\in[\bm{M}=Y]}\widetilde{\theta}_{j}+K)]\\
			&= \sum_{y=1}^{K} \eta_y\phi(\bm{\theta},y)
			+ \sum_{\bm{\sigma}\in \Sigma(\bm{\theta})} \pr(\bm{\sigma})
			\Big(
			\pr(M_{\sigma_1}=Y\mid X=\bm{x})\phi(\bm{\theta},\sigma_1+K)
			+ \pr(M_{\sigma_1}\not=Y, M_{\sigma_2}=Y\mid X=\bm{x})\phi(\bm{\theta},\sigma_2+K)\\
			&+\cdots
			+\pr(M_{\sigma_{1:J-1}}\not=Y, M_{\sigma_J}=Y\mid X=\bm{x})\phi(\bm{\theta},\sigma_J+K) \Big)\\
			& = \sum_{y=1}^{K} \eta_y\phi(\bm{\theta},y) + \sum_{\bm{\sigma}\in \Sigma(\bm{\theta})} \pr(\bm{\sigma}) \left(\sum_{j=1}^{J}\pr(M_{\sigma_{1:j-1}} \not=Y,M_{\sigma_j}=Y|X=x)\phi(\bm{\theta}, \sigma_j+K)\right)
		\end{align*}
		Since $\phi$ is symmetric to the last $J$ inputs, for any $\bm{\sigma}\in\Sigma(\bm{\theta})$, there exists a permutation matrix $\widetilde{P}\in \mathbb{R}^{J\times J}$ such that
		\begin{align*}
			&\sum_{j=1}^{J}\pr(M_{\sigma_{1:j-1}} \not=Y,M_{\sigma_j}=Y|X=x)\phi(\bm{\theta}, \sigma_j+K)\\
			&= \Big[ \pr(M_{\sigma_1}=Y)|X=\bm{x}),\cdots,\pr(M_{\sigma_{1:J-1}} \not=Y,M_{\sigma_J}=Y|X=x) \Big]
			\Big[\phi(\bm{\theta}, \sigma_1+K),\cdots,\phi(\bm{\theta}, \sigma_J+K)\Big]^{\top}\\
			&= \Big[ \pr(M_{\sigma_1}=Y)|X=\bm{x}),\cdots,\pr(M_{\sigma_{1:J-1}} \not=Y,M_{\sigma_J}=Y|X=x)\Big]\widetilde{P}\bm{\phi}(\widetilde{\bm{\theta}})\\
			& = \Big[\pr(M_1=Y, \bigcap_{j\in\{i\in [J]: \widetilde{\theta}_i\geq \widetilde{\theta}_1 \} } M_{j}\not=Y |X=\bm{x}),\cdots, \pr(M_J=Y, \bigcap_{j\in\{i\in [J]: \widetilde{\theta}_i\geq \widetilde{\theta}_J \} } M_{j}\not=Y |X=\bm{x})\Big] \bm{\phi}(\widetilde{\bm{\theta}}) \\
		\end{align*}
		where $\bm{\phi}(\widetilde{\bm{\theta}})=[\phi(\bm{\theta},1+K),\cdots,\phi(\bm{\theta},J+K)]^{\top}$.
		Therefore, 
		\begin{align*}
			& \sum_{\bm{\sigma}\in \Sigma(\bm{\theta})} \pr(\bm{\sigma}) \left(\sum_{j=1}^{J}\pr(M_{\sigma_{1:j-1}} \not=Y,M_{\sigma_j}=Y\mid X=x)\phi(\bm{\theta}, \sigma_j+K)\right)\\
			&=\left(\sum_{\bm{\sigma}\in \Sigma(\bm{\theta})}\pr(\bm{\sigma})\right)
			\Big[\pr(M_1=Y, \bigcap_{j\in\{i\in [J]: \widetilde{\theta}_i\geq \widetilde{\theta}_1 \} } M_{j}\not=Y \mid X=\bm{x}),\cdots, \pr(M_J=Y, \bigcap_{j\in\{i\in [J]: \widetilde{\theta}_i\geq \widetilde{\theta}_J \} } M_{j}\not=Y \mid X=\bm{x})\Big] \bm{\phi}(\widetilde{\bm{\theta}})\\
			&=\sum_{j=1}^{J}\pr(M_{\sigma_{1:j-1}} \not=Y,M_{\sigma_j}=Y\mid X=x)\phi(\bm{\theta}, \sigma_j+K), \text{ for any }\bm{\sigma}\in \Sigma(\bm{\theta}),
		\end{align*}
		which completes the proof.
	\end{proof}
	
	\subsection{Proof of Theorem~\ref{Lemma_PiCCE_Underfitting}}
	\begin{proof}
		\begin{align*}
			\sum_{y=1}^{K}\eta_{y}+\sum_{j=1}^{J}\pr(M_{\sigma_{1:j-1}}\not=Y,M_{\sigma_{j}}=Y\mid X=\bm{x}) &=1+ \sum_{j=1}^{J}\pr( \bigcap_{i<j}(M_{\sigma_i}=Y)^c \cap (M_{\sigma_{j}}=Y)\mid X=\bm{x})\\
			&= 1+ \pr(\bigcup_{j\in[J]} M_{\sigma_j}=Y\mid X=\bm{x})\\
			&= 1+ \pr(\bigcup_{j\in[J]} M_{j}=Y\mid  X=\bm{x})
		\end{align*}
		where $(M_{\sigma_i}=Y)^c$ denotes the complement event of $M_{\sigma_i}=Y$.
	\end{proof}
	
	\section{Proof of Conclusions in Section \ref{sec5}}
	In this proof, we focus on a fixed but arbitrary $\bm{x}$, 
	and the derivation can be extended to any $\bm{x}\in\mathcal{X}$, 
	and thus we omit the dependence of $\bm{\eta}$ on $\bm{x}$.

	\subsection{Proof of Lemma~\ref{lemma_condition}}
	\subsubsection{Proof of Lemma~\ref{lemma_condition}, (A).}
	\begin{proof}
		Denote by $\widehat{\eta}_{y}=\mathrm{softmax}(\bm{\theta})_{y}$ and $u_{j}=\mathrm{softmax}(\bm{\theta})_{K+j}$.
		 According to the risk formulation in Theorem \ref{Thm_PiCCE_risk}, the optimization problem can be formulated as below:
		\begin{align*}
			\min_{\bm{\widehat{\eta}},\bm{u}}~~& -\sum_{y=1}^{K}\eta_{y}\log\widehat{\eta}_{y}-\sum_{j=1}^{J}\pr(M_{\sigma_{1:j-1}}\not=Y,M_{\sigma_j}=Y\mid X=\bm{x})\log u_{\sigma_{j}}.\\
			s.t.~~~&\widehat{\eta}_{y},u_{j}\in[0,1],~\forall y\in[K],j\in[J]\\
			&\sum_{i=1}^{K}\widehat{\eta}_{i}+\sum_{j=1}^{J}u_{j}=1.
		\end{align*}
		\paragraph{Regularity of this problem:} First of all, we can notice that \textbf{the minimizer is attainable} 
		according to the continuity of the objective (Theorem \ref{Thm_Continuity}) and that the feasible region is closed.
		
		Denote by the optimal classifier $\bm{\widehat{\eta}}^{*}$ and optimal expert score $\bm{u}^*$. 
		we can first notice that $\sum_{j=1}^{J}u^*_j<1$,
		otherwise the value of the objective will be infinitely large since $\bm{\widehat{\eta}}=0$. 
		Then we continue the proof:    
		\begin{itemize}
			\item \textbf{Step 1:} For any fixed $\bm{u}$, 
			we can learn that $\widehat{\eta}_{y}=\eta_{y}(1-\sum_{j=1}^{J}u_j)$. 

			In this case, we write $\bm{\widehat{\eta}}$ as a function of $\bm{u}$. 
			Then we can omit the constant $\bm{\eta}$  and the problem can be formulated as:
			\begin{align*}
				\min_{\bm{u}}~~& F(\bm{u})=a-\log(1-\sum_{j=1}^{J}u_j)-\sum_{j=1}^{J}\pr(M_{\sigma_{1:j-1}}\not=Y,M_{\sigma_j}=Y\mid X=\bm{x})\log u_{\sigma_{j}}.\\
				s.t.~~~&u_{j}\in[0,1],~\forall y\in[K],j\in[J].\\
				&\sum_{j=1}^{J}u_{j}<1.
			\end{align*}
			
			\item \textbf{Step 2:} According to the regularity, suppose the minimizer of this problem $\bm{u}^{*}$.
			Then we prove that $\widetilde{V}(\bm{x})=\frac{V(\bm{x})}{1+V(\bm{x})}$ by contradiction.
			Suppose $\sum_{j=1}^{J}u_j=\frac{V(\bm{x})}{1+V(\bm{x})}+\delta$, where $\delta\not=0$ and $\frac{V(\bm{x})}{1+V(\bm{x})}+\delta\in [0,1)$.
			Then denote by $\widetilde{\bm{u}}=\left(\frac{\frac{V(\bm{x})}{1+V(\bm{x})}}{\frac{V(\bm{x})}{1+V(\bm{x})}+\delta}\right)\bm{u}$,
			and we can then learn:
			\begin{align*}
				F(\bm{u})-F(\widetilde{\bm{u}})&=-\log\left(1-\frac{V(\bm{x})}{1+V(\bm{x})}-\delta\right)-\sum_{j=1}^{J}\pr(M_{\sigma_{1:j-1}}\not=Y,M_{\sigma_j}=Y\mid X=\bm{x})\log u_{\sigma_{j}}\\
				&+\log\left(1-\frac{V(\bm{x})}{1+V(\bm{x})}\right)+\sum_{j=1}^{J}\pr(M_{\sigma_{1:j-1}}\not=Y,M_{\sigma_j}=Y\mid X=\bm{x})\log u_{\sigma_{j}}\\
				&+\underbrace{\sum_{j=1}^{J}\pr(M_{\sigma_{1:j-1}}\not=Y,M_{\sigma_j}=Y\mid X=\bm{x})}_{\text{$=V(\bm{x})$ according to Lemma \ref{Lemma_PiCCE_Underfitting}}}\left(\log\frac{V(\bm{x})}{1+V(\bm{x})}-\log\left(\frac{V(\bm{x})}{1+V(\bm{x})}+\delta\right)\right)\\
				&=\left(-\log\left(1-\frac{V(\bm{x})}{1+V(\bm{x})}-\delta\right)-V(\bm{x})\log\left(\frac{V(\bm{x})}{1+V(\bm{x})}+\delta\right)\right)\\
				&~~~~-\left(-\log\left(1-\frac{V(\bm{x})}{1+V(\bm{x})}\right)-V(\bm{x})\log\left(\frac{V(\bm{x})}{1+V(\bm{x})}\right)\right)\\
				&>0
			\end{align*}
			The last inequality is obtained since log loss is a proper loss. 
			and thus $F(\bm{u})>F(\widetilde{\bm{u}})$,
			which indicates that $\widetilde{V}(\bm{x})=\frac{V(\bm{x})}{1+V(\bm{x})}$.
		\end{itemize}
		Combining Step 1 and Step 2 and we can conclude the proof.
	\end{proof}
	\subsubsection{Proof of Lemma~\ref{lemma_condition}, (B).}
	\begin{proof}
	For any $\bm{\theta}\in\mathbb{R}^{K+j}$, we abuse the notation $s_{i}\coloneqq s(\theta_{i})$, which omits the dependence on $\bm{\theta}$. 
	Furthermore, we denote by $u_{j}\coloneqq s_{j+K}$.
	According to the risk formulation in Theorem \ref{Thm_PiCCE_risk}, the risk minimization problem can be formulated as:
	\begin{align*}
		\min_{\bm{s},\bm{u}}~~& \underbrace{-\sum_{y=1}^{K}\eta_{y}\log s_{y}-\sum_{y=1}^{K}(1-\eta_{y})\log (1-s_{y})}_{F(\bm{s}_{1:K})}-\sum_{j=1}^{J}\pr(M_{\sigma_{1:j-1}}\not=Y,M_{\sigma_j}=Y\mid X=\bm{x})\log u_{\sigma_{j}}\\
		&-\sum_{j=1}^{J}(1-\pr(M_{\sigma_{1:j-1}}\not=Y,M_{\sigma_j}=Y\mid X=\bm{x}))\log(1- u_{\sigma_{j}})\\
		s.t.~~~&s_i,u_j\in[0,1],~\forall i\in[K],j\in[J].
	\end{align*}
	First of all, the minimizer is attainable since the feasible region is closed and the target is continuous.
	Furthermore, the optimization problem is separable since $\bm{s}_{1:K}$ and $\bm{s}_{K+1:K+J}$ are independent in both the target and feasible region.
	Notice that $F(\bm{s}_{1:K})$ is minimized at $s_{y}=\eta_{y}$ since $s_{y}\in[0,1]$, which concludes the proof.
	\end{proof}
	
	\subsection{Proof of Theorem \ref{consistent}}
	
	\subsubsection{Proof of Theorem \ref{consistent}, (a).}
	\begin{proof}
		Denote by $\widehat{\eta}_{y}=\mathrm{softmax}(\bm{\theta})_{y}$, $u_{j}=\mathrm{softmax}(\bm{\theta})_{K+j}$, $\mathcal{S}$ the set that $u_j\in[0,1]$, and $\sum_{j\in[J]}u_j=\widetilde{V}(\bm{x})$.
		
		\begin{itemize}
			\item\textbf{Step 1:} \textbf{First we prove that $j^*\in\mathrm{Argmax}_{j\in[J]}u^*_j$.}
			We prove it by contradiction.
			Assuming $j^*\not\in\mathrm{Argmax}_{j\in[J]}u^*_j$,
			then we can find a permutation $\bm{\sigma}$ that $u^*_{\sigma_1}\geq \cdots \geq u^*_{\sigma_J}$ and $u^*_{\sigma_1}>u^*_{j^*}$.
			According to Lemma \ref{lemma_condition}, the risk can be written as:
			
			$$-\sum_{y=1}^{K}\eta_{y}\log\eta_{y}(1-\widetilde{V}(\bm{x}))\underbrace{-\sum_{j=1}^{J}\pr(M_{\sigma_{1:j-1}}\not=Y,M_{\sigma_j}=Y|X=\bm{x})\log u^*_{\sigma_{j}}}_{P(\bm{u}^*)},$$
			
			and we focus on $P(\bm{u}^*)$ since the first part is unchanged for any $\bm{u}\in\mathcal{S}$.
			Denote by $\bm{u}'$ that $u'_{\sigma_1}=u^*_{j^*}$, $u'_{j^*}=u^*_{\sigma_1}$, and $u'_{j}=u^*_{j}$ for $j\not\in\{\sigma_1,j^*\}$. Denote by $j'$ the element that $\sigma_{j'}=j^*$.
			Then $u'_{\sigma'_1}\geq\cdots\geq u'_{\sigma'_J}$ for $\bm{\sigma}'$ 
			that $\sigma'_{1}=j^*$, $\sigma'_{j'}=\sigma_1$, and $\sigma'_j=\sigma_{j}$ for $j\not\in\{1,j'\}$. 
			
			Thus $P(\bm{u}')$ can be written as:
			\begin{align*}
				P(\bm{u}')&=-\sum_{j=1}^{J}\pr(M_{\sigma'_{1:j-1}}\not=Y,M_{\sigma'_j}=Y|X=\bm{x})\log u'_{\sigma'_{j}}\\
				&=-\sum_{j=1}^{J}\pr(M_{\sigma'_{1:j-1}}\not=Y,M_{\sigma'_j}=Y|X=\bm{x})\log u^{*}_{\sigma_{j}}
			\end{align*}
			Denote by $T(\bm{p})=-\sum_{j=1}^{J}p_j\log u^{*}_{\sigma_{j}}$.
			Also denote by $\bm{p}^{*}=[\pr(M_{\sigma_{1:j-1}}\not=Y,M_{\sigma_j}=Y|X=\bm{x})]_{j=1}^{J}$ and $\bm{p}'=[\pr(M_{\sigma'_{1:j-1}}\not=Y,M_{\sigma'_j}=Y|X=\bm{x})]_{j=1}^{J}$, 
			we can learn $T(\bm{p^*})=P(\bm{u}^{*})$ and $T(\bm{p'})=P(\bm{u}')$.
			
			Denote by $S^*_{j}=\sum_{j'=1}^{j}p^*_{j'}=\pr(C_{\sigma_{1:j}}|X=\bm{x})$ and $S'_{j}=\sum_{j'=1}^{j}p'_{j'}=\pr(C_{\sigma'_{1:j}}|X=\bm{x})$.
			Also since $\bm{u}^*$ is not a constant vector, we can find $\tilde{j}$ that $u^*_{\sigma_{\tilde{j}}}>u^*_{\sigma_{\tilde{j}+1}}$.
			Then we can further decompose $P(\bm{u}^*)-P(\bm{u'})$ into:
			\begin{align*}
				P(\bm{u}^*)-P(\bm{u'})&=T(\bm{p^*})-T(\bm{p'})\\
				&=-\sum_{j=1}^{J}(p^*_{j}-p'_j)\log u^*_{\sigma_j}\\
				&=\sum_{j=1}^{J-1}(S^*_{j}-S'_j)(\log u^*_{\sigma_{j+1}}-\log u^*_{\sigma_{j}})~~~\text{(Summation by parts)}\\
				&=\sum_{j=1}^{J-1}\underbrace{(\pr(C_{\sigma_{1:j}}|X=\bm{x})-\pr(C_{\sigma'_{1:j}}|X=\bm{x}))}_{< 0\text{~according to Condition \ref{c1}}}\underbrace{(\log u^*_{\sigma_{j+1}}-\log u^*_{\sigma_{j}})}_{\leq 0\text{~since $u^*_{\sigma_j}$ is in descending order}}\\
				&\geq\underbrace{(\pr(C_{\sigma_{1:\tilde{j}}}|X=\bm{x})-\pr(C_{\sigma'_{1:\tilde{j}}}|X=\bm{x}))}_{< 0\text{~according to Condition \ref{c1}}}\underbrace{(\log u^*_{\sigma_{\tilde{j}+1}}-\log u^*_{\sigma_{\tilde{j}}})}_{<0}\\
				&>0.
			\end{align*}
			Then we can learn $P(\bm{u}^{*})>P(\bm{u}')$, 
			which contradicts that $\bm{u}^{*}$ is the minimizer.
			
			\item\textbf{Step 2:} \textbf{Second we prove that $\mathrm{Argmax}_{j\in[J]}u^*_j=\{j^*\}$.}
			Suppose $|\mathrm{Argmax}_{j\in[J]}u^*_j|=J'>1$,
			Then there exists a permutation $\bm{\sigma}$ that $\sigma_1=j^*$, $u^{*}_{\sigma_{1}},\cdots,u^{*}_{\sigma_{J'}}=u^*_{j^*}$, 
			and $u^*_{\sigma_{J'+1}}\geq\cdots u^*_{\sigma_{J}}$.
			Then $P(\bm{u}^*)$ can be written as:
			\begin{align*}
				P(\bm{u}^*)&=-\sum_{j=1}^{J}\pr(M_{\sigma_{1:j-1}}\not=Y,M_{\sigma_j}=Y|X=\bm{x})\log u^{*}_{\sigma_{j}}\\
				&=-\sum_{j=1}^{J'}\pr(M_{\sigma_{1:j-1}}\not=Y,M_{\sigma_j}=Y|X=\bm{x})\log u^{*}_{j^*}\\
				&-\sum_{j=J'+1}^{J}\pr(M_{\sigma_{1:j-1}}\not=Y,M_{\sigma_j}=Y|X=\bm{x})\log u^{*}_{\sigma_{j}}
			\end{align*}
			Denote by $\delta>0$ that $\delta<\min\left\{\frac{u^*_{j^*}-u^*_{\sigma_{J'+1}}}{u^*_{j^*}},~\frac{1-u_{^*}}{u_{^*}}\right\}=b$. 
			Denote by $\bm{u}'$ that $u'_{\sigma_1}=(1+\delta)u^*_{j^*}$, 
			$u'_{\sigma_{J'}}=(1-\delta)u^*_{j^*}$, and other elements equals those of $\bm{u}^*$. 
			In this case, we can still get $\bm{u}'_{\sigma_1}\geq\cdots\geq\bm{u}'_{\sigma_1}$. 
			Then $P(\bm{u}')$ can be written as:
			\begin{align*}
				P(\bm{u}')&=-\sum_{j=1}^{J}\pr(M_{\sigma_{1:j-1}}\not=Y,M_{\sigma_j}=Y|X=\bm{x})\log u'_{\sigma_{j}}\\
				&=-\sum_{j=J'+1}^{J}\pr(M_{\sigma_{1:j-1}}\not=Y,M_{\sigma_j}=Y|X=\bm{x})\log u^{*}_{\sigma_{j}}\\
				&~~~~~-\pr(M_{\sigma_{1:J'-1}}\not=Y,M_{\sigma_{J'}}=Y|X=\bm{x})\log \left(1-\delta\right)u^{*}_{j^*}\\
				&~~~~~-\pr(M_{j^*}=Y|X=\bm{x})\log (1+\delta)u^{*}_{j^*}
			\end{align*}
			Then:
			\begin{align*}
				P(\bm{u}^*)-P(\bm{u}')&=\pr(M_{\sigma_{1:J'-1}}\not=Y,M_{\sigma_{J'}}=Y|X=\bm{x})\log \left(1-\delta\right)\\
				&+\pr(M_{j^*}=Y|X=\bm{x})\log (1+\delta)
			\end{align*}
			Notice $\pr(M_{\sigma_{1:J'-1}}\not=Y,M_{\sigma_{J'}}=Y|X=\bm{x})\leq\pr(M_{\sigma_{J'}}=Y|X=\bm{x})<\pr(M_{j^*}=Y|X=\bm{x})$. 
			Denote by $A\coloneqq\pr(M_{j^*}=Y|X=\bm{x}),~B\coloneqq\pr(M_{\sigma_{1:J'-1}}\not=Y,M_{\sigma_{J'}}=Y|X=\bm{x})$.
			We can learn that for any $\delta\in\left(0,\min\left\{b,\frac{A-B}{A+B}\right\}\right)$:
			\begin{align*}
				P(\bm{u^*})-P(\bm{u}')&=\log\left((1-\delta)^{B}(1+\delta)^{A}\right)=F(\delta).
			\end{align*}
			Notice that $F'(\delta)=\frac{A}{1+\delta}-\frac{B}{1-\delta}>0$ on $\left(0,\frac{A-B}{A+B}\right)$, 
			and $F(0)=0$, 
			then we can learn $F(\delta)>0$ on $\left(0,\min\left\{b,\frac{A-B}{A+B}\right\}\right)$,
			and then $P(\bm{u^*})>P(\bm{u}')$ for such $\bm{u}'$, which has unique argmax element $j^*$.
			
			In conclusion, for any $\bm{u}^*$ with multiple argmax elements, we can always find on $\bm{u}'$ with unique argmax element $j^*$ that has a lower risk, which concludes the proof.
			
			\textbf{{}{}{}Step 1 and Step 2 concludes that $\mathrm{Argmax}_{j\in[J]}u^{*}_{j}$ is uniquely obtained at $j^*$.}
			
			\item\textbf{Step 3:} \textbf{Finally we prove $u^{*}_{j^*}=\ac_{j^*}(\bm{x})(1-\widetilde{V}(\bm{x}))$.} 
			According to Step 1, we have that $j^*\in\mathrm{Argmax}_{j\in[J]}u^*_{j}$.
			Denote by $\mathcal{S}_{\sigma}\coloneqq\{\bm{u}\in\mathcal{S}:u_{\sigma_{1}}\geq\cdots u_{\sigma_{J}}\}$,
			we can learn that $\bm{u}^*\in \mathcal{S}_{\bm{\sigma}}$ for some $\bm{\sigma}$ that $\sigma_{1}=j^*$.
			Then we turn to prove that for any $\bm{\sigma}$ that $\sigma_{1}=j^*$, 
			$u'_{j^*}=\ac_{j^*}(\bm{x})(1-\widetilde{V}(\bm{x}))$ 
			for any $\bm{u}'\in\mathrm{Argmin}_{\bm{u}\in\mathcal{S}_{\bm{\sigma}}}P(\bm{u})$.
			
			Notice that $\sigma_{1}=j^*$, and thus $\pr(M_{\sigma_{1}}=Y|X=\bm{x})>\pr(M_{\sigma_{1:j-1}}\not=Y,M_{\sigma_j}=Y|X=\bm{x})$ for any $j>1$. 
			Then we formulate the optimization problem as below:
			\begin{align*}
				\min_{\bm{u}}&   -\sum_{j=1}^{J}\pr(M_{\sigma_{1:j-1}}\not=Y,M_{\sigma_j}=Y|X=\bm{x})\log u_{\sigma_{j}}\\
				s.t.~~~&\sum_{j=1}^{J}u_{j}=\widetilde{V}(\bm{x})\\
				& u_{\sigma_{j+1}}-u_{\sigma_{j}}\leq0
			\end{align*}
			Notice that the objective is convex and constraints are affine, 
			and thus KKT conditions are sufficient and necessary. 
			Then we conduct KKT analysis.
			For clear representation, let $\mu_{\sigma_{0}}=\mu_{\sigma_{J}}=0$. 
			The KKT conditions can be written as:
			$$\begin{cases}
				&\pr(M_{\sigma_{1:j-1}}\not=Y,M_{\sigma_j}=Y|X=\bm{x})=u_{\sigma_{j}}(\lambda+\mu_{\sigma_{j-1}}-\mu_{\sigma_{j}}),~~~~~~(\text{Stationary condition})\\
				&\mu_{\sigma_{j}}\geq0,~~~~~~~~~~~~~~~~~~~~~~~~~~~~~~~~~~~~~~~~~~~~~~~~~~~~~~~~~~~~~~~~~~~~~~~~~~~~~~~~~~~~~~~~~~~~~~(\text{Dual feasibility})\\
				&\mu_{\sigma_{j}}(u_{\sigma_{j+1}}-u_{\sigma_{j}})=0.~~~~~~~~~~~~~~~~~~~~~~~~~~~~~~~~~~~~~~~~~~~~~~~~~~~~~~~~~~~~~~~~~~~~~~~(\text{Complementary slackness})
			\end{cases}$$
			Summing up the stationary condition for $j=1,\cdots,J$ and we can learn:
			$$V(\bm{x})=\lambda\widetilde{V}(\bm{x})+\sum_{j=1}^{J}u_{\sigma_{j}}(\mu_{\sigma_{j-1}}-\mu_{\sigma_{j}})=\lambda\widetilde{V}(\bm{x})+\underbrace{\sum_{j=1}^{J}\mu_{\sigma_{j}}(u_{\sigma_{j-1}}-u_{\sigma_{j}})}_{=0\text{~according to complementary slackness}}$$
			Then we can learn $\lambda=1+V(\bm{x})$.
			Again according to stationary condition we can learn:
			$$\ac_{j^*}(\bm{x})=u_{j^*}(1+V(\bm{x})-\mu_{j^*}).$$
			Then we can learn $\mu_{j^*}=1+V(\bm{x})-\frac{\ac_{j^*}(\bm{x})}{u_{j^*}}$.
			{%
				then we can learn $\mu_{j^*}=0$ since $u_{\sigma_{2}}-u_{j^*}<0$.
				and thus $u_{j^*}=\frac{\ac_{j^*}(\bm{x})}{1+V(\bm{x})}=\ac_{j^*}(\bm{x})(1-\widetilde{V}(\bm{x}))$.}
		\end{itemize}
	Then we can conclude the proof since softmax operator is order-preserving.
	\end{proof}
	\subsubsection{Proof of Theorem \ref{consistent}, (b).}
	\begin{proof}
	Since $\theta^{*}_{j}$ has been solved in Lemma \ref{lemma_condition}, we focus on the following problem:
	\begin{align*}
		\min_{\bm{u}\in [0,1]^{J}}P(\bm{u})=&-\sum_{j=1}^{J}\pr(M_{\sigma_{1:j-1}}\not=Y,M_{\sigma_j}=Y\mid X=\bm{x})\log u_{\sigma_{j}}\\
		&-\sum_{j=1}^{J}(1-\pr(M_{\sigma_{1:j-1}}\not=Y,M_{\sigma_j}=Y\mid X=\bm{x}))\log(1- u_{\sigma_{j}})
	\end{align*}
	Denote by $\bm{u^*}$ any minimizer of $P(\bm{u})$, which exists according to Lemma \ref{lemma_condition}:
	\begin{itemize}
		\item \textbf{Step 1:} We first prove that $j^*\in\mathrm{Argmax}_{j\in[J]}u^*_{j}$.
		
		Denote by $\bm{\sigma}$ a descending permutation of $\bm{u}^{*}$. Suppose $\sigma_{1},\cdots,\sigma_{i}\in\mathrm{Argmax}_{j\in[J]}u^*_{j}$ and $\sigma_{i^*}=j^*\not\in\mathrm{Argmax}_{j\in[J]}u^*_{j}$.
		Then swapping $u^*_{\sigma_{1}}$ and $u^*_{j^*}$ and we construct $\bm{u}'$,
		which has a descending permutation $\bm{\sigma}'$ that $\sigma'_1=j^*$.
		Denote by $T(\bm{p})=-\sum_{j=1}^{J}p_j\log u^{*}_{\sigma_{j}}-\sum_{j=1}^{J}(1-p_j)\log (1-u^{*}_{\sigma_{j}})$.
		Also denote by $\bm{p}^{*}=[\pr(M_{\sigma_{1:j-1}}\not=Y,M_{\sigma_j}=Y|X=\bm{x})]_{j=1}^{J}$, $\bm{p}'=[\pr(M_{\sigma'_{1:j-1}}\not=Y,M_{\sigma'_j}=Y|X=\bm{x})]_{j=1}^{J}$, 
		$S^*_{j}=\sum_{j'=1}^{j}p^*_{j'}=\pr(C_{\sigma_{1:j}}|X=\bm{x})$, and $S'_{j}=\sum_{j'=1}^{j}p'_{j'}=\pr(C_{\sigma'_{1:j}}|X=\bm{x})$,
		we can learn $P(\bm{u}^{*})=T(\bm{p}^*)$ and $P(\bm{u}')=T(\bm{p}')$.
		We have the following condition:
		\begin{align*}
		P(\bm{u}^*)-P(\bm{u}')&=T(\bm{p}^*)-T(\bm{p}')=-\sum_{j=1}^{J}(p^*_j-p'_j)\log u^*_{\sigma_{j}}-\sum_{j=1}^{J}(p'_j-p^*_j)\log (1-u^*_{\sigma_{j}})\\
		&=\sum_{j=1}^{J-1}(S^*_{j}-S'_j)(\log u^*_{\sigma_{j+1}}-\log u^*_{\sigma_{j}})+\sum_{j=1}^{J-1}(S'_{j}-S^*_j)(\log (1-u^*_{\sigma_{j+1}})-\log (1-u^*_{\sigma_{j}}))~~~\text{(Summation by parts)}\\
		&=\sum_{j=1}^{J-1}\underbrace{(\pr(C_{\sigma_{1:j}}|X=\bm{x})-\pr(C_{\sigma'_{1:j}}|X=\bm{x}))}_{< 0\text{~according to Condition \ref{c1}}}\underbrace{(\log u^*_{\sigma_{j+1}}-\log u^*_{\sigma_{j}})}_{\leq 0\text{~since $u^*_{\sigma_j}$ is in descending order}}\\
		&~~~~+\sum_{j=1}^{J-1}\underbrace{(\pr(C_{\sigma'_{1:j}}|X=\bm{x})-\pr(C_{\sigma_{1:j}}|X=\bm{x}))}_{> 0\text{~according to Condition \ref{c1}}}\underbrace{(\log (1-u^*_{\sigma_{j+1}})-\log (1-u^*_{\sigma_{j}}))}_{\geq 0\text{~since $u^*_{\sigma_j}$ is in descending order}}\\
		&\geq\underbrace{(\pr(C_{\sigma_{1:i^*-1}}|X=\bm{x})-\pr(C_{\sigma'_{1:i^*-1}}|X=\bm{x}))}_{< 0\text{~according to Condition \ref{c1}}}\underbrace{(\log u^*_{\sigma_{i^*}}-\log u^*_{\sigma_{i^*-1}})}_{< 0}\\
		&~~~~+\underbrace{(\pr(C_{\sigma'_{1:i^*-1}}|X=\bm{x})-\pr(C_{\sigma_{1:i^*-1}}|X=\bm{x}))}_{> 0\text{~according to Condition \ref{c1}}}\underbrace{(\log (1-u^*_{\sigma_{i^*}})-\log (1-u^*_{\sigma_{i^*-1}}))}_{> 0}\\
		&>0,
		\end{align*}
		which means $\bm{u}'$ decreases the risk,and thus concludes the proof of this step.
		\item \textbf{Step 2:} We prove that $\{j^*\}=\mathrm{Argmax}_{j\in[J]}u^*_{j}$, and $u^*_{j^*}=\ac_{j^*}(\bm{x})$.
		Notice that $j^*\in\mathrm{Argmax}_{j\in[J]}u^*_{j}$:
		\begin{itemize}
			\item Suppose $u_{j^*}=\max_{j}u_j<\ac_{j^*}(\bm{x})$, then we can increase it to $u'_{j^*}=\eta_{j^*}$ with $u'_{j}=u^*_{j}$ for $j\not=j^*$,  which strictly decreases the risk value due to the properness of log loss. 
			\item Suppose $u_{j^*}=\max_{j}u_j>\ac_{j^*}(\bm{x})$, construct the following $\bm{u}'$:
			\begin{align*}
			u'_{j}=\begin{cases}
			\ac_{j^*}(\bm{x}),&j=j^*,\\
			\max\left(\max_{j\in\{2,\cdots,J\}}\pr(M_{\sigma_{1:j-1}}\not=Y,M_{j}=Y|X=\bm{x}),u^*_{j}\right),&\text{else}.
			\end{cases}
			\end{align*}
			Then we can learn the permutation of $\bm{u}^*$ and $\bm{u}'$ remain unchanged, 
			and the risk decreases strictly according to the definition of log loss.
			Furthermore, $\ac_{j^*}(\bm{x})>\ac_{j}\geq\pr(M_{\sigma_{1:j-1}}\not=Y,M_{j}=Y|X=\bm{x})$,
			which means $u^*_{j}>u^*_{\sigma_{2}}$.
		\end{itemize}
		
		According to the discussions above, we proved the claim of this step.
		
	\end{itemize}
	Then we can conclude the proof since sigmoid function is invertible.
	\end{proof}

\section{Details of Experiments}
\label{Append:Experiment}

We implemented all the methods by Pytorch \cite{pytorch} and conducted all the experiments on NVIDIA H200 GPUs.

\subsection{Experimental Setup on Synthetic Expert Datasets}

\paragraph{Optimizers} All models are trained using SGD with a momentum of $0.9$ and a weight decay of $5\times10^{-4}$.
The initial learning rate is set to $0.1$ and is scheduled by the cosine annealing.
Each model is trained for 200 epochs (batch size 128) on CIFAR-100 and 90 epochs (batch size 512) on ImageNet.

\paragraph{CIFAR-100: Animal Experts}
We consider a subset of CIFAR-100 consisting of 50 \textbf{biological classes}, e.g.,  mammals, fish, and insects. Following common practice, we refer to this subset as the biological (or “animal”) subset for simplicity. And we then simulate experts with varying levels of competence for CIFAR-100 in the following ways:
\begin{itemize}
    \item[1)] \textbf{Animal Expert}: Each expert's domain includes 2 disjoint biological classes; accuracy is $94\%$ on domain, $75\%$ on other biological species, and random on the rest.
    \item[2)] \textbf{Overlapped Animal Expert}: We fix the overlap length to 5 and re-define overlapping expert domains for each expert count so that their union exactly covers all biological classes; accuracy follows the same setting as above.
    \item[3)] \textbf{Varying-Accuracy Animal Expert}: Each expert's domain includes 2 disjoint biological classes, with in-domain accuracy linearly increasing from $88\%$ to $94\%$; accuracy on other animals is $75\%$, and random otherwise.
\end{itemize}

\paragraph{ImageNet: Dog Experts}
We consider a subset of ImageNet consisting of 120 \textbf{dog classes}, e.g., Labrador retriever and German shepherd. Based on this subset, we simulate domain-specialized dog experts with varying levels of competence for ImageNet in the following three ways:
\begin{itemize}
    \item[1)] \textbf{Dog Expert}: Each expert's domain includes 5 disjoint dog classes; accuracy is $88\%$ on domain, $75\%$ on other dog species, and random on the rest.
    \item[2)] \textbf{Overlapped Dog Expert}: We fix the overlap length to 50 and re-define overlapping expert domains for each expert count so that their union exactly covers all dog classes; accuracy follows the same setting as above.
    \item[3)] \textbf{Varying-Accuracy Dog Expert}: Each expert's domain includes 5 disjoint dog classes, with in-domain accuracy linearly increasing from $75\%$ to $88\%$; accuracy on other animals is $75\%$, and random otherwise.
\end{itemize}

\subsection{Experimental Setup on Real-world Expert Datasets}

\paragraph{Datasets} 
We further conduct experiments on two datasets with real-world experts, MiceBone \cite{micebone2} and Chaoyang \cite{chaoyang}, to demonstrate the practical effectiveness of our PiCCE method.
\begin{itemize}
    \item The MiceBone dataset consists of 7,240 second-harmonic generation microscopy images categorized into three classes: similar collagen fiber orientation, dissimilar
    collagen fiber orientation, and not of interest.
    The dataset contains annotations from 79 professional human annotators, with each image annotated by up to five professional annotators. Among these human annotators, only eight provide labels for the entire dataset.
    Following \citet{zhang2023learning}, we consider these eight annotators as human experts. Details of their prediction performance are provided in Table~\ref{MiceBone_8Expers}.
    In our experiments, we vary the number of experts, choosing $\#\text{Exp}\in\{2,4,6,8\}$, and consider the first 2, 4, 6, and 8 experts according to the order in Table~\ref{MiceBone_8Expers}.
    The dataset is partitioned into five folds. We use the first four folds for training and the remaining fold for testing, resulting in 5,697 training images and 1,543 test images.
\end{itemize}

\begin{table}[H]
    \caption{Prediction accuracy (Acc, $\%$) of the eight selected experts on the MiceBone dataset.}
    \centering
    \begin{tabular}{cccccccccc}
    \toprule
    Expert ID &047&290&533&534&580&581&966&745 \\
    \midrule
    Train Acc&84.64&85.01&87.43&88.13&81.73&85.96&87.05&85.45 \\
    Test Acc&84.64&84.71&86.33& 85.68&79.59&84.64&87.88 &84.90\\
    \bottomrule
    \end{tabular}
    \label{MiceBone_8Expers}
\end{table}

\begin{itemize}
    \item The Chaoyang dataset consists of  colon histopathology image patches collected at Chaoyang Hospital in China, categorized into four classes: normal, serrated, adenocarcinoma, and adenoma. 
    Each image is annotated by three pathologists with prediction accuracies $87\%$, $91\%$, and $99\%$, respectively.
    Following \citet{zhang2023learning}, we use the first two pathologists as experts and exclude the near-oracle annotator.
    To evaluate settings with different expert cardinalities, we simulate additional experts. 
    Specifically, for $\#\text{Exp} \in \{4,6,8\}$, additional experts are introduced with identical settings to the available ones. Experts' predictions are generated independently.
\end{itemize}

\paragraph{Models and Optimizers}
We use ResNet-18 as base model for both MiceBone and Chaoyang datasets.
All models are trained using AdamW optimizer with an initial learning rate of $3\times10^{-4}$ and a weight decay of $5\times10^{-4}$.
the initial learning rate.
Each model is trained for 100 epochs with a batch size of 128.

\subsection{Additional Experimental Results}

\paragraph{System Error and Coverage}
From Table~\ref{Table: Varying Acc}, it can be seen that PiCCE method consistently outperforms the multi-expert CE and OvA surrogate losses under the varying-accuracy synthetic expert setting across both CIFAR-100 and ImageNet datasets, achieving lower system error together with substantially higher coverage.
As the number of experts increases, the vanilla framework exhibits pronounced performance degradation, whereas PiCCE remains robust. Overall, these results show that PiCCE effectively improves the overall system performance in multi-expert L2D.

\paragraph{Classifier Error}
As shown in Figure~\ref{Fig: Varying Acc}, 
for both the varying-accuracy animal expert setting on CIFAR-100 and the varying-accuracy dog expert setting on ImageNet, the classifier accuracy of CE and OvA degrades steadily as the number of experts increases.
In particular, OvA exhibits a pronounced accuracy drop with more experts, while CE also suffers from a consistent decline.
In contrast, the PiCCE-based variants maintain stable classifier accuracy across different numbers of experts and consistently outperform their vanilla counterparts on both datasets.
This observation further suggests that PiCCE effectively resovles the underfitting induced by expert aggregation in practical multi-expert L2D.

\begin{table*}[tbp]
    \centering
    \caption{The mean of the system error (Err, rescaled to 0-100) and coverage (Cov) for 3 trails on the CIFAR-100 and ImageNet datasets. The best performance is highlighted in boldface.}
    \begin{tabular}{c|c|c|cc|cc}
    \toprule
     \multicolumn{3}{c}{Expert Pattern} & \multicolumn{4}{c}{Varying-Acc Animal Expert}\\
    \midrule
    \multicolumn{3}{c}{Loss Formulation} & \multicolumn{2}{c}{Vanilla}& \multicolumn{2}{c}{PiCCE} \\
    \midrule
    Dataset&Method&$\#$Exp  & Err & Cov &  Err & Cov \\
    \midrule
    \multirow{10}{*}{CIFAR-100}&\multirow{5}{*}{CE}&4&18.96& 75.34 & \textbf{18.40} & \textbf{77.16} \\
    &&8& 19.56& 76.59 & \textbf{18.38} & \textbf{77.76}\\
    &&12& 20.18& 75.31& \textbf{18.21} & \textbf{78.09} \\
    &&16& 22.46 & 76.10 & \textbf{18.13}& \textbf{78.65}\\
    &&20& 22.56& 76.13 & \textbf{18.09} & \textbf{78.45}\\
    \cmidrule{2-7}
    &\multirow{5}{*}{OvA}&4 & 19.07 & 85.04 & \textbf{19.01} & \textbf{86.33} \\
    &&8& 20.65 & 84.27& \textbf{18.95} & \textbf{85.10}\\
    &&12& 22.65 & 83.80& \textbf{18.88}& \textbf{85.99} \\
    &&16& 24.77 & 79.21& \textbf{18.83}&\textbf{85.85}\\
    &&20& 26.08 & 78.02& \textbf{18.69}& \textbf{85.78}\\
    \midrule[1pt]
     \multicolumn{3}{c}{Expert Pattern}  &  \multicolumn{4}{c}{Varying-Acc Dog Expert}\\
     \midrule
         \multicolumn{3}{c}{Loss Formulation} & \multicolumn{2}{c}{Vanilla}& \multicolumn{2}{c}{PiCCE}\\
    \midrule
    Dataset&Method&$\#$Exp  & Err & Cov &  Err & Cov  \\
    \midrule
   \multirow{10}{*}{ImageNet}&\multirow{5}{*}{CE}&4&42.58&89.21&\textbf{41.87}&\textbf{90.02}\\
    &&8&44.27 & 88.82&\textbf{41.28}&\textbf{89.93} \\
    &&12& 46.28& 88.27&\textbf{41.18}&\textbf{90.01}\\
    &&16&47.02&88.71& \textbf{41.21}& \textbf{90.07}\\
    &&20&47.98&87.81&\textbf{41.39}& \textbf{90.10}\\
    \cmidrule{2-7}
     &\multirow{5}{*}{OvA}&4& 45.03&88.24&\textbf{42.12}&\textbf{89.40} \\
    &&8&52.38&86.71&\textbf{42.35}&\textbf{89.51} \\
   &&12&58.13& 85.16&\textbf{42.51}&\textbf{89.26} \\
    &&16& 61.02& 84.54&\textbf{42.14}&\textbf{89.32} \\
    &&20& 65.27& 83.81&\textbf{42.13}&\textbf{89.24} \\
    \bottomrule
    \end{tabular}
    \label{Table: Varying Acc}
\end{table*}

\begin{figure*}[]
    \centering
    \begin{subfigure}[t]{0.4\textwidth}
        \centering
        \includegraphics[width=\linewidth]{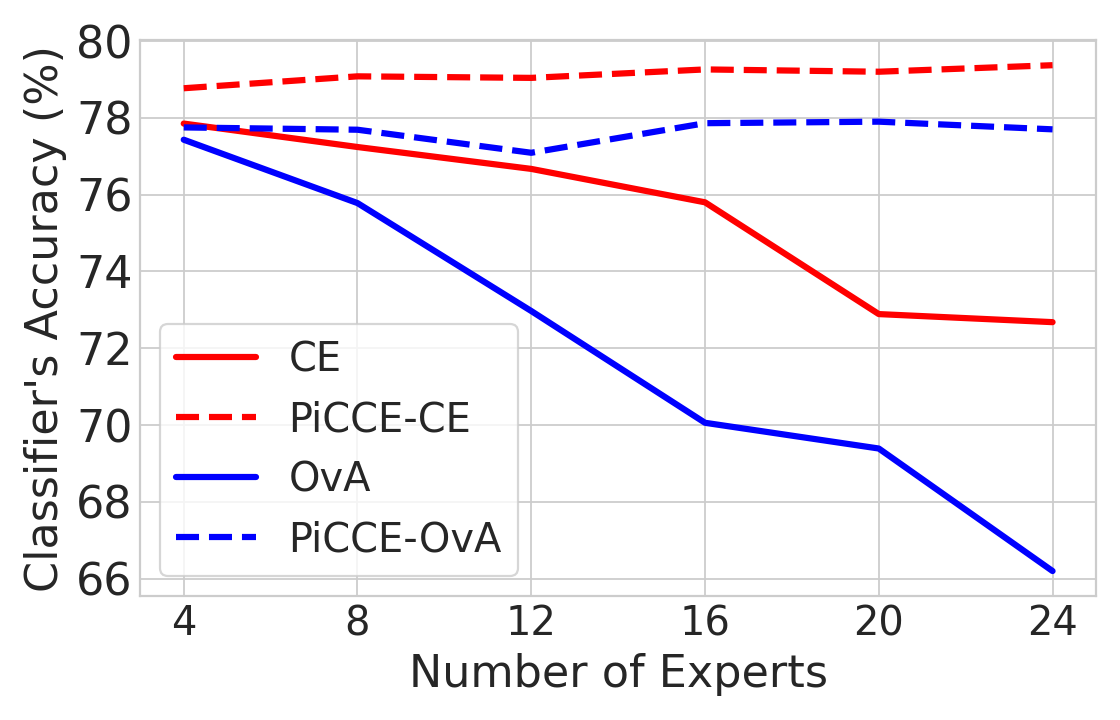}
        \caption{Varying-Accuracy Animal Expert}
        \label{}
    \end{subfigure}
    \begin{subfigure}[t]{0.4\textwidth}
        \centering
        \includegraphics[width=\linewidth]{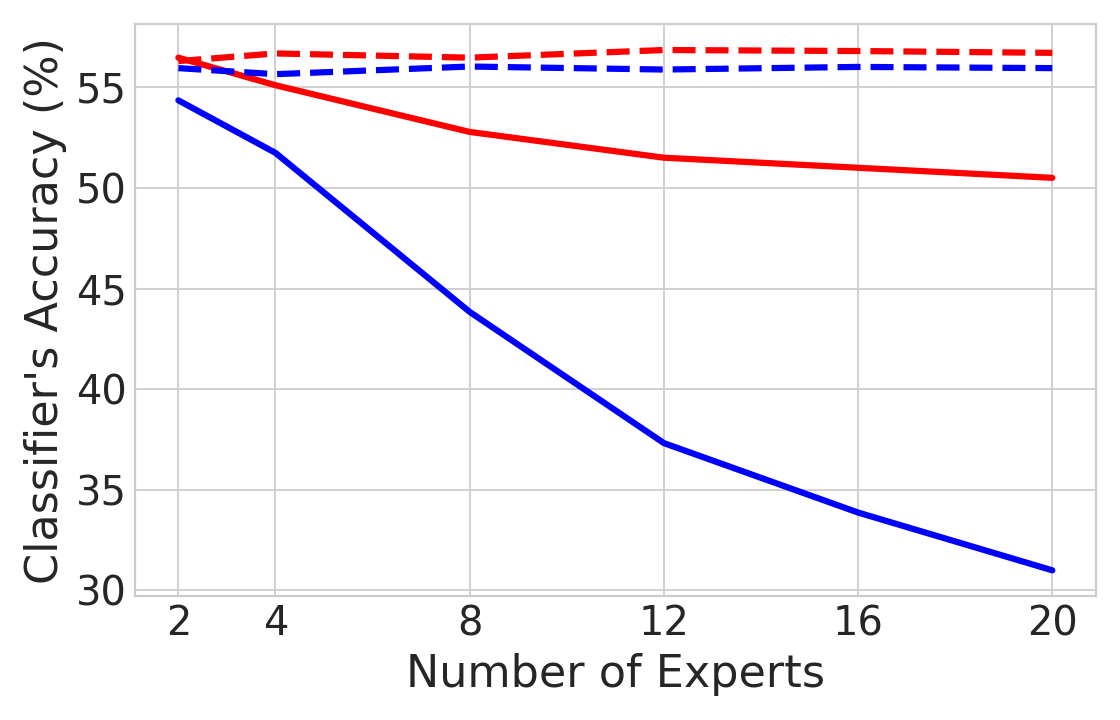}
        \caption{Varying-Accuracy Dog Expert}
        \label{}
    \end{subfigure}
    \caption{We report the classifier's accuracy on CIFAR-100 and ImageNet datasets. Solid lines are the methods from existing formulation \eqref{systemloss} while dashed lines are the methods derived from our PiCCE formulation \eqref{Loss_PiCCE}.}
    \label{Fig: Varying Acc}
\end{figure*}

\end{document}